\documentclass[reprint,amsmath,amssymb,aps,pra,groupedaddress,floatfix]{revtex4-2}

\usepackage{graphicx}
\usepackage[final]{microtype}
\usepackage{amsthm}
\usepackage{algpseudocode}
\usepackage{booktabs}
\usepackage{multirow}
\usepackage{xcolor}
\usepackage{hyperref}
\usepackage{tikz}
\usepackage{subcaption}
\usepackage[newcommands]{ragged2e}
\usepackage[labelfont=bf,labelsep=colon,justification=Justified,singlelinecheck=false,font=small,format=plain,compatibility=false]{caption}
\usetikzlibrary{shapes,arrows,positioning,calc,decorations.pathmorphing,shapes.geometric}

\newtheorem{theorem}{Theorem}
\newtheorem{proposition}{Proposition}

\newcommand{\R}{\mathbb{R}}

\newcommand{\E}{\mathbb{E}}
\newcommand{\vect}[1]{\mathbf{#1}}
\newcommand{\mat}[1]{\mathbf{#1}}

\begin{document}
\captionsetup{justification=Justified,singlelinecheck=false}

\title{Selective Synchronization Attention}

\author{Hasi Hays}
\email{hasih@uark.edu}
\affiliation{Department of Chemical Engineering, University of Arkansas, Fayetteville, AR 72701, USA}

\date{\today}

\begin{abstract}
The Transformer architecture has become the foundation of modern deep learning, yet its core self-attention mechanism suffers from quadratic computational complexity and lacks grounding in biological neural computation. We propose \emph{Selective Synchronization Attention} (SSA), a novel attention mechanism that replaces the standard dot-product self-attention with a closed-form operator derived from the steady-state solution of the Kuramoto model of coupled oscillators. In SSA, each token is represented as an oscillator characterized by a learnable natural frequency and phase; the synchronization strength between token pairs, determined by a frequency-dependent coupling and phase-locking condition, serves as the attention weight. This formulation provides three key advantages: (i) natural sparsity arising from the phase-locking threshold, whereby tokens with incompatible frequencies automatically receive zero attention weight without explicit masking; (ii) unified positional-semantic encoding through the natural frequency spectrum, eliminating the need for separate positional encodings; and (iii) a single-pass, closed-form computation that avoids iterative ODE integration, with all components (coupling, order parameter, synchronization) derived from the oscillatory framework. We instantiate SSA within the Oscillatory Synchronization Network (OSN), a drop-in replacement for the Transformer block. Analysis of the synchronization matrices reveals non-uniform, head-diverse coupling patterns even at initialization, demonstrating a stronger architectural inductive bias than the approximately uniform attention produced by randomly initialized Transformers.
\end{abstract}

\maketitle

%% ============================================================
%%  1. INTRODUCTION
%% ============================================================
\section{Introduction}\label{sec:intro}

The Transformer architecture~\cite{vaswani2017attention,hays2026encyclopedia} has fundamentally reshaped deep learning, achieving state-of-the-art performance across natural language processing, computer vision, and scientific computing~\cite{hays2025hierarchical}. At the heart of the Transformer lies the self-attention mechanism, which computes pairwise interactions between all tokens in a sequence:
\begin{equation}\label{eq:std_attn}
    \text{Attn}(\mat{Q}, \mat{K}, \mat{V}) = \text{softmax}\!\left(\frac{\mat{Q}\mat{K}^\top}{\sqrt{d_k}}\right) \mat{V},
\end{equation}
where $\mat{Q}, \mat{K}, \mat{V} \in \R^{N \times d_k}$ are the query, key, and value projections of an input sequence of length $N$. While powerful, this mechanism has two fundamental limitations. First, its $O(N^2 d_k)$ computational complexity and $O(N^2)$ memory footprint create a prohibitive bottleneck for long sequences. Second, despite its empirical success, dot-product attention has no clear analogue in biological neural computation.

In contrast, the brain routes information through \emph{oscillatory synchronization}. The Communication Through Coherence (CTC) hypothesis~\cite{fries2005mechanism,fries2015rhythms} proposes that effective neuronal communication requires phase alignment of oscillatory activity between sending and receiving populations: coherent oscillations open temporal windows for synaptic transmission, while desynchronized populations are effectively disconnected. This principle, binding by synchrony~\cite{singer1995visual,engel2001dynamic}, provides a biologically grounded mechanism for selective information routing that operates without computing explicit pairwise similarity scores.

Recent work has begun bridging oscillatory dynamics and deep learning. The AKOrN framework~\cite{rende2024mapping} introduced Kuramoto-type dynamics at the neuron level within neural network layers, demonstrating that oscillatory interactions can enhance representation learning. However, AKOrN modifies neuronal activations rather than the attention mechanism itself, and relies on iterative ODE integration during the forward pass, which limits scalability. Separately, efficient attention alternatives such as Mamba~\cite{gu2023mamba}, RetNet~\cite{sun2023retentive}, and linear attention~\cite{katharopoulos2020transformers} address the quadratic complexity of attention but do not draw on the oscillatory synchronization principles that govern biological neural computation. A fundamental question thus emerges: can the principles of oscillatory synchronization, which the brain employs to orchestrate information flow across billions of neurons, be\linebreak harnessed to replace the attention mechanism itself, rather than merely augmenting learned representations? An ideal solution would combine the computational advantages of efficient attention with the biological grounding of oscillatory dynamics, providing both scalability and interpretability within a single coherent framework.

In this work, we propose \emph{Selective Synchronization Attention} (SSA), a mechanism that \emph{replaces} the dot-product self-attention with a closed-form operator derived from the steady-state Kuramoto model~\cite{kuramoto1984chemical,strogatz2000kuramoto}. Each token is represented as an oscillator with a learnable natural frequency $\omega_i$ and initial phase $\theta_i$; the synchronization strength between tokens $i$ and $j$, determined by their frequency compatibility and coupling, serves as the attention weight. The key insight is that the Kuramoto model admits analytical steady-state solutions for the pairwise synchronization landscape, enabling us to compute attention weights in a single forward pass without iterative ODE integration. This closed-form solution transforms the Kuramoto model from a dynamical system requiring numerical simulation into a feed-forward operator suitable for gradient-based training.

This work makes four principal contributions toward bridging oscillatory neural dynamics and modern sequence modeling architectures. First, we derive SSA, a closed-form attention operator grounded in the steady-state solution of the Kuramoto model of coupled oscillators. By exploiting the analytical phase-locking conditions of the Kuramoto mean field, SSA computes synchronization-based attention weights in a single forward pass without requiring iterative ODE integration, providing a principled and computationally efficient replacement for dot-product self-attention. Second, we show that SSA provides natural sparsity through the phase-locking condition: token pairs whose natural frequencies are incompatible, that is, whose frequency mismatch exceeds the product of the global coupling strength, the emergent order parameter, and the frequency-dependent coupling, automatically receive zero synchronization weight. This intrinsic sparsity arises from the physics of the model rather than from external heuristics such as top-$k$ selection or explicit masking, and the degree of sparsity is controllable through the coupling parameters. Crucially, the coupling itself is derived from the oscillatory framework (oscillators with similar frequencies couple more strongly), the order parameter is computed empirically from the phase distribution at each forward pass, and the coupling strength uses a smooth positive parameterization, ensuring that every component of the SSA operator maintains physical grounding.

Third, we demonstrate that the learnable natural frequency spectrum serves as a unified positional-semantic encoding. Because the synchronization between two tokens depends on the difference between their frequency vectors, and because gradient descent on a language modeling objective drives nearby tokens toward compatible frequencies (Proposition~\ref{thm:pe_emerges}), the frequency representation naturally encodes both the sequential position and the semantic content of each token, eliminating the need for separate positional embeddings. Fourth, we instantiate SSA within the Oscillatory Synchronization Network (OSN), a drop-in replacement for the standard Transformer block that preserves the input-output interface while replacing multi-head attention with multi-frequency synchronization heads. GPU benchmarks on an NVIDIA A100 demonstrate near-identical parameter counts with standard Transformer blocks, structured synchronization patterns, and emergent phase coherence, validating the mechanism as a functional drop-in replacement for dot-product attention.

%% ============================================================
%%  2. RELATED WORK
%% ============================================================
\section{Related Work}\label{sec:related}

\subsection{Efficient Attention Mechanisms}\label{sec:efficient_attn}

The quadratic complexity of standard attention~\cite{hays2026attention} has motivated numerous efficiency improvements. FlashAttention~\cite{dao2022flashattention} reduces memory overhead through IO-aware tiling but retains $O(N^2)$ computation. Sparse attention patterns~\cite{child2019generating} restrict the attention field to local windows and fixed strides. The Performer~\cite{choromanski2021rethinking} approximates the softmax kernel via random features, achieving linear complexity but with approximation error. Linear Transformers~\cite{katharopoulos2020transformers} replace the softmax with a kernel decomposition that enables $O(N)$ computation through the associativity of matrix products. These methods preserve the dot-product attention paradigm while reducing its cost; SSA instead replaces the paradigm entirely with an oscillatory synchronization mechanism.

\subsection{Attention Alternatives}\label{sec:attn_alternatives}

Several architectures have proposed complete alternatives to attention. Structured state space models (S4)~\cite{gu2022efficiently} and their selective variant Mamba~\cite{gu2023mamba} process sequences through linear recurrences, achieving $O(N)$ complexity with strong performance on long-range tasks. RetNet~\cite{sun2023retentive} combines retention mechanisms with linear recurrence for efficient inference. RWKV~\cite{peng2023rwkv} reformulates attention as a linear RNN. Hyena~\cite{poli2023hyena} replaces attention with long convolutions parameterized by implicit neural networks. While these methods address efficiency, they depart from the attention paradigm without providing the interpretable, biologically motivated mechanism that SSA offers.

\subsection{Oscillatory Models in Machine Learning}\label{sec:osc_models}

Oscillatory dynamics have appeared in machine learning primarily through the Kuramoto model~\cite{kuramoto1984chemical}, with resonance-based architectures also emerging in geometric deep learning~\cite{hays2026rsgn}. KuraNet~\cite{ricci2021kuranet} learned to predict Kuramoto model dynamics from visual inputs. The Kuramoto model has been applied in graph neural networks for community detection and graph partitioning~\cite{rodrigues2016kuramoto,hays2026RAG-GNN}. Most relevant to our work, AKOrN~\cite{rende2024mapping} introduced Kuramoto oscillator dynamics at the neuron level, replacing standard activations with oscillatory states that are iteratively updated via ODE integration. Critically, AKOrN does \emph{not} replace the attention mechanism: it modifies the representation within layers while still relying on standard attention for inter-token communication. In contrast, SSA replaces the attention mechanism itself with a closed-form synchronization operator, operating at the token level rather than the neuron level, and avoiding iterative ODE solvers entirely.

\subsection{Neuroscience of Oscillatory Attention}\label{sec:neuro}

The Communication Through Coherence (CTC) framework~\cite{fries2005mechanism,fries2015rhythms} provides the theoretical foundation for our approach. CTC proposes that neuronal communication is gated by the coherence of oscillatory activity: populations that oscillate in phase can communicate effectively, while desynchronized populations are functionally disconnected. Singer and Gray~\cite{singer1995visual} demonstrated that temporal synchronization in the gamma band ($30$--$80$~Hz) binds distributed feature representations into coherent percepts. Engel et al.~\cite{engel2001dynamic} extended this to a framework of dynamic predictions where synchronization patterns implement flexible, context-dependent information routing. Cross-frequency coupling, particularly theta-gamma phase-amplitude coupling~\cite{canolty2010oscillatory}, further organizes information across temporal scales. SSA translates these principles into a differentiable computational mechanism.

%% ============================================================
%%  3. METHOD
%% ============================================================
\section{Method}\label{sec:method}

\subsection{Preliminaries: The Kuramoto Model}\label{sec:kuramoto}

The Kuramoto model~\cite{kuramoto1984chemical} describes the dynamics of $N$ coupled oscillators, each characterized by a phase $\theta_i \in [0, 2\pi)$ and a natural frequency $\omega_i$:
\begin{equation}\label{eq:kuramoto}
    \frac{d\theta_i}{dt} = \omega_i + \frac{K}{N} \sum_{j=1}^{N} \sin(\theta_j - \theta_i), \quad i = 1, \ldots, N,
\end{equation}
where $K > 0$ is the global coupling strength. The collective behavior is captured by the \emph{order parameter}:
\begin{equation}\label{eq:order_param}
    r \, e^{i\psi} = \frac{1}{N} \sum_{j=1}^{N} e^{i\theta_j},
\end{equation}
where $r \in [0, 1]$ measures the degree of global phase coherence and $\psi$ is the mean phase. When $r \approx 1$, the oscillators are nearly synchronized; when $r \approx 0$, they are incoherent.

The classical result of Kuramoto~\cite{kuramoto1984chemical} and Strogatz~\cite{strogatz2000kuramoto} shows that for a population with natural frequencies drawn from a symmetric unimodal distribution $g(\omega)$, there exists a critical coupling strength $K_c = 2/(\pi g(0))$ above which a macroscopic fraction of oscillators locks to a common frequency. At steady state, an oscillator $i$ is \emph{phase-locked} to the mean field if and only if $|\omega_i - \psi| \leq Kr$, where $\psi$ is the frequency of the mean field. This phase-locking condition is the key insight that we exploit for attention.

\subsection{Selective Synchronization Attention}\label{sec:ssa}

We now derive the SSA mechanism. Given an input sequence $\mat{X} = [\vect{x}_1, \ldots, \vect{x}_N]^\top \in \R^{N \times D}$, we compute the following oscillatory representations.

\paragraph{Phase and frequency initialization.}
Each token $i$ is assigned a natural frequency vector and an initial phase vector via learned projections:
\begin{align}
    \boldsymbol{\omega}_i &= \mat{W}_\omega \, \vect{x}_i \in \R^d, \label{eq:freq_proj} \\
    \boldsymbol{\theta}_i &= \mat{W}_\theta \, \vect{x}_i \in \R^d, \label{eq:phase_proj}
\end{align}
where $\mat{W}_\omega, \mat{W}_\theta \in \R^{d \times D}$ are learnable weight matrices and $d = D/H$ is the head dimension for $H$ heads.

\paragraph{Frequency-dependent coupling.}
The pairwise coupling strength between oscillators $i$ and $j$ is derived from the oscillatory framework through the principle of frequency clustering: oscillators with similar natural frequencies couple more strongly, a well-established phenomenon in coupled oscillator theory~\cite{strogatz2000kuramoto,acebron2005kuramoto}:
\begin{equation}\label{eq:coupling}
    J_{ij}^{(h)} = \exp\!\left(-\alpha_h \, \|\boldsymbol{\omega}_i - \boldsymbol{\omega}_j\|_2^2\right),
\end{equation}
where $\alpha_h > 0$ is a learnable coupling bandwidth parameter for each synchronization head $h$, parameterized as $\alpha_h = \text{softplus}(\hat{\alpha}_h)$ to ensure positivity. This yields $J_{ij} \in (0, 1]$, with $J_{ij} \approx 1$ for oscillators with similar frequencies and $J_{ij} \approx 0$ for dissimilar ones. Crucially, the coupling is entirely determined by the frequency representations, maintaining full consistency with the oscillatory framework: the coupling strength is a structural property of the oscillator network that depends on frequency compatibility rather than on an auxiliary learned scoring function.

\paragraph{Closed-form phase-alignment operator.}
The core of SSA is the synchronization matrix $\mat{S} \in \R^{N \times N}$, derived from the steady-state Kuramoto analysis. Consider the pairwise synchronization between oscillators $i$ and $j$ in the mean-field approximation. Their relative phase $\phi_{ij} = \theta_i - \theta_j$ satisfies at steady state~\cite{strogatz2000kuramoto,acebron2005kuramoto}:
\begin{equation}\label{eq:steady_state}
    \sin(\phi_{ij}^*) = \frac{\Delta\omega_{ij}}{K \, r \, J_{ij}},
\end{equation}
where $\Delta\omega_{ij} = \|\boldsymbol{\omega}_i - \boldsymbol{\omega}_j\|_2$ is the frequency mismatch and $K$ is a learnable global coupling strength. This equation has a solution if and only if the \emph{phase-locking condition} is satisfied:
\begin{equation}\label{eq:phase_lock}
    |\Delta\omega_{ij}| \leq K \, r \, J_{ij}.
\end{equation}

When the condition holds, the synchronization strength, measured by the coherence $\cos(\phi_{ij}^*)$, is:
\begin{equation}\label{eq:phase_align}
    S_{ij} = J_{ij} \cdot \cos\!\left(\arcsin\!\left(\frac{\Delta\omega_{ij}}{K \, r \, J_{ij}}\right)\right).
\end{equation}
Using the identity $\cos(\arcsin(x)) = \sqrt{1 - x^2}$, this simplifies to:
\begin{equation}\label{eq:phase_align_sqrt}
    S_{ij} = \begin{cases}
        J_{ij} \sqrt{1 - \left(\dfrac{\Delta\omega_{ij}}{K \, r \, J_{ij}}\right)^2} & \text{if } \Delta\omega_{ij} \leq K \, r \, J_{ij}, \\[8pt]
        0 & \text{otherwise}.
    \end{cases}
\end{equation}

The synchronization matrix $\mat{S}$ is the direct analogue of the attention weight matrix in standard Transformers: $S_{ij}$ measures how strongly token $j$ should contribute to the representation of token $i$. The crucial difference is that $\mat{S}$ is \emph{naturally sparse}; pairs with large frequency mismatch are automatically excluded by the phase-locking condition, without any explicit top-$k$ selection or masking. Figure~\ref{fig:oscillators} illustrates this mechanism: tokens with compatible natural frequencies form synchronized clusters on the phase circle (\ref{fig:phase_space}), occupying distinct regions in frequency space (\ref{fig:freq_space}), which produces the block-diagonal synchronization matrix (\ref{fig:sync_matrix}) where within-cluster pairs synchronize and cross-cluster pairs receive zero attention weight.

\begin{figure*}[t]
\centering
\includegraphics[width=\textwidth]{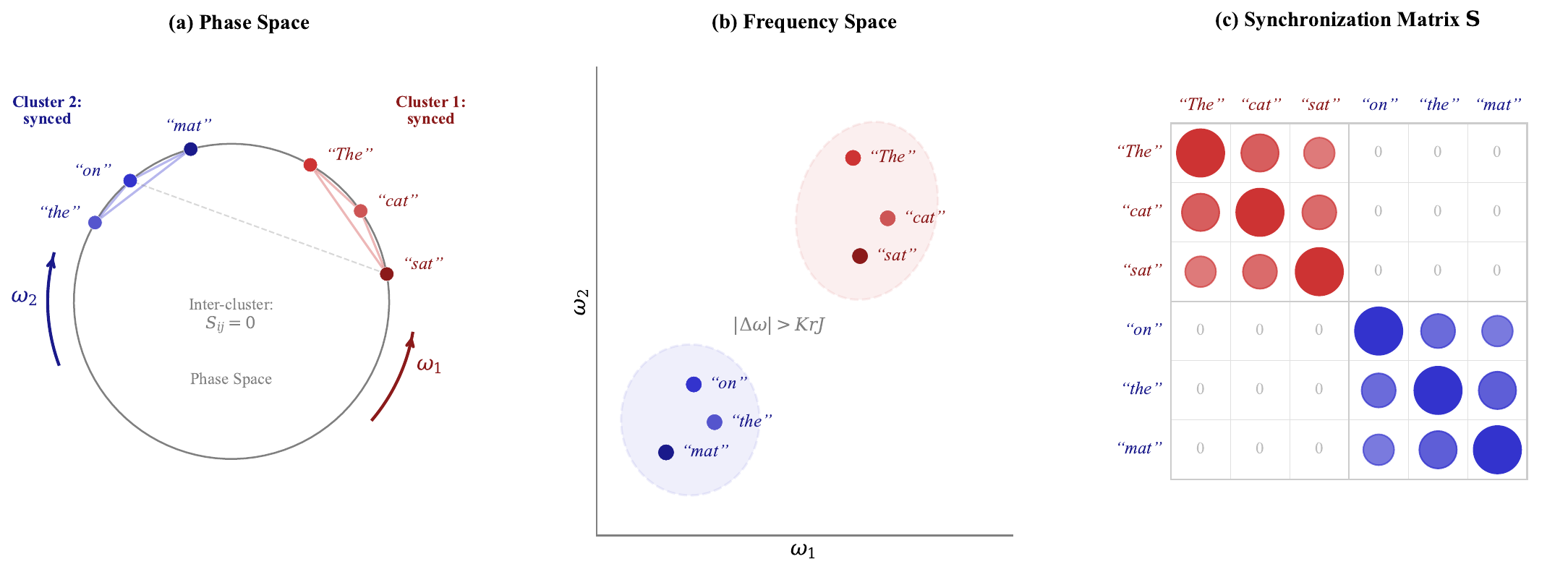}%
\parbox{0pt}{\phantomsubcaption\label{fig:phase_space}}%
\parbox{0pt}{\phantomsubcaption\label{fig:freq_space}}%
\parbox{0pt}{\phantomsubcaption\label{fig:sync_matrix}}%
\caption{Selective Synchronization Attention: conceptual illustration for the sentence ``The cat sat on the mat.'' \textbf{(a)}~Phase space: tokens are represented as oscillators on the phase circle. Tokens with similar natural frequencies ($\omega_1$ or $\omega_2$) synchronize into coherent clusters (red and blue), enabling information exchange via solid connections. Adjacent tokens from different clusters (``sat'' and ``on'') remain desynchronized ($S_{ij} = 0$, dashed line), producing natural sparsity. \textbf{(b)}~Frequency space: the two clusters occupy distinct regions, separated by the phase-locking threshold $|\Delta\omega| > KrJ$. \textbf{(c)}~Synchronization matrix $\mathbf{S}$: the resulting block-diagonal structure, with strong within-cluster synchronization and zero cross-cluster weights.}
\label{fig:oscillators}
\end{figure*}

In practice, we compute the order parameter $r$ directly from the phase distribution at each forward pass, consistent with the Kuramoto formulation (Eq.~\ref{eq:order_param}):
\begin{equation}\label{eq:empirical_r}
    r = \frac{1}{d} \sum_{l=1}^{d} \left|\frac{1}{N} \sum_{j=1}^{N} e^{i\theta_j^{(l)}}\right|,
\end{equation}
where the average is taken over the $d$ phase dimensions. This makes $r$ an emergent property of the current token population's phase coherence rather than a free parameter, preserving the physical meaning of the order parameter. The global coupling strength is parameterized as $K = \text{softplus}(\hat{K})$ for a learnable scalar $\hat{K}$, ensuring $K > 0$ with smooth gradients. With clamping for numerical stability:
\begin{equation}\label{eq:practical_S}
    S_{ij} = J_{ij} \cdot \cos\!\left(\arcsin\!\left(\text{clamp}\!\left(\frac{\Delta\omega_{ij}}{K \, r \, J_{ij} + \epsilon}, -1, 1\right)\right)\right),
\end{equation}
with $S_{ij}$ set to zero where $\Delta\omega_{ij} > K \, r \, J_{ij}$.

\paragraph{Output computation.}
The value vectors $\mat{V} = \mat{X} \mat{W}_V$ are aggregated using the normalized synchronization weights:
\begin{equation}\label{eq:output}
    \vect{y}_i = \frac{\sum_{j=1}^{N} S_{ij} \, \vect{v}_j}{\sum_{j=1}^{N} S_{ij} + \epsilon},
\end{equation}
followed by a linear output projection $\mat{W}_O$.

\subsection{Multi-Frequency Synchronization Heads}\label{sec:mfsh}

Analogous to multi-head attention, we employ $H$ \emph{synchronization heads}, each operating on a $d$-dimensional subspace of the frequency and phase representations. Each head $h$ computes its own synchronization matrix $\mat{S}^{(h)}$ using independent frequency projections $\mat{W}_\omega^{(h)}$, phase projections $\mat{W}_\theta^{(h)}$, and coupling parameters. The outputs are concatenated and projected:
\begin{equation}\label{eq:mfsh}
    \text{MFSH}(\mat{X}) = \text{Concat}(\vect{y}^{(1)}, \ldots, \vect{y}^{(H)}) \, \mat{W}_O.
\end{equation}

Different heads can learn to operate at different frequency scales, analogous to the multi-band structure of neural oscillations where theta, alpha, beta, and gamma bands serve different computational roles~\cite{canolty2010oscillatory}.

\subsection{The OSN Block}\label{sec:osn_block}

The full \emph{Oscillatory Synchronization Network} (OSN) block follows the pre-norm Transformer architecture with SSA replacing multi-head attention:
\begin{align}
    \vect{z} &= \mat{X} + \text{Dropout}(\text{MFSH}(\text{LayerNorm}(\mat{X}))), \label{eq:osn_attn} \\
    \mat{Y} &= \vect{z} + \text{Dropout}(\text{FFN}(\text{LayerNorm}(\vect{z}))), \label{eq:osn_ffn}
\end{align}
where FFN is a two-layer feed-forward network with GELU activation~\cite{ba2016layer}. The OSN block has the same input-output interface ($\R^{N \times D} \to \R^{N \times D}$) as a standard Transformer block, enabling drop-in replacement. Figure~\ref{fig:architecture} compares the architecture of the standard Transformer block with the OSN block, highlighting the structural correspondence between multi-head attention and multi-frequency synchronization heads.

\begin{figure*}[t]
\centering
\includegraphics[width=\textwidth]{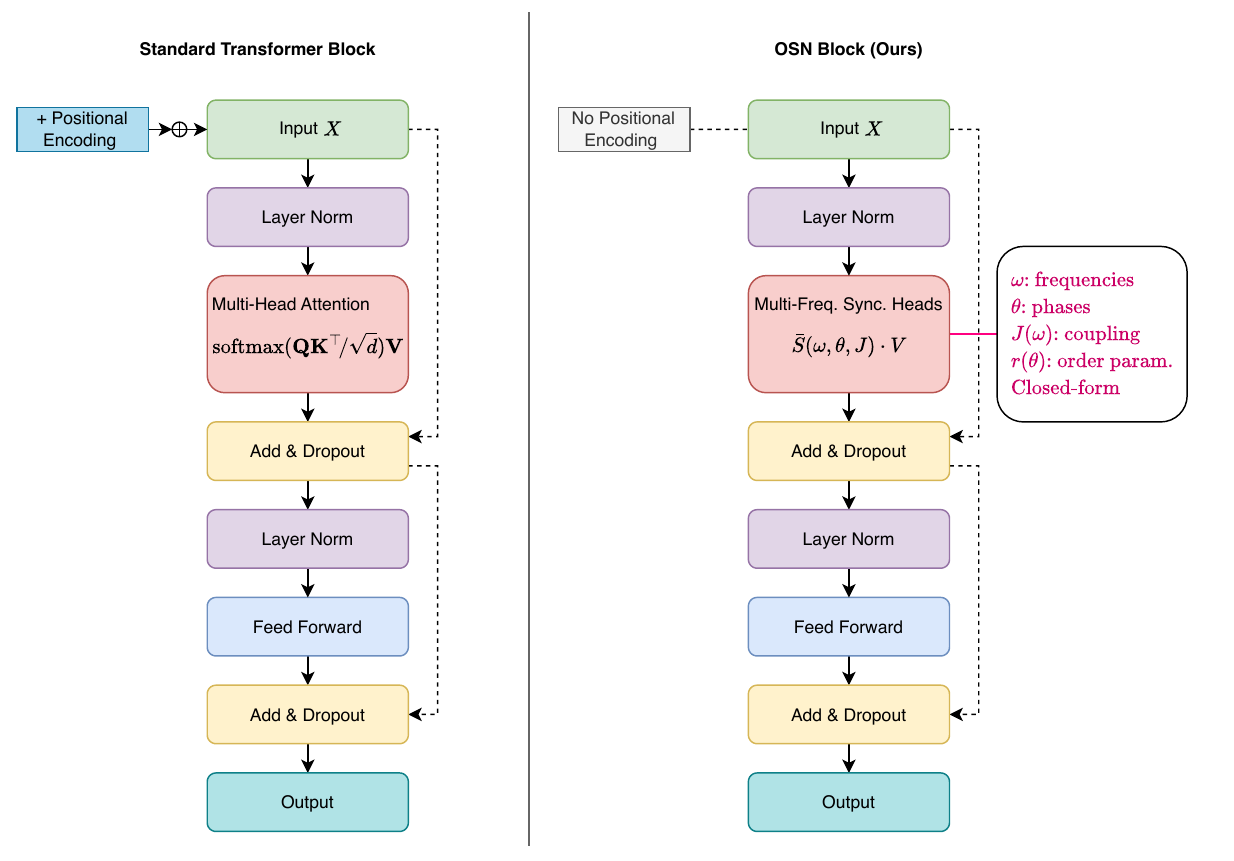}
\caption{Architecture comparison. \textbf{Left:} Standard Transformer block with multi-head dot-product attention and explicit positional encoding. \textbf{Right:} OSN block with Multi-Frequency Synchronization Heads using the closed-form phase-alignment operator. The blocks share identical input-output interfaces ($\R^{N \times D} \to \R^{N \times D}$), enabling drop-in replacement.}
\label{fig:architecture}
\end{figure*}

\subsection{Complexity Analysis}\label{sec:complexity}

We analyze the computational complexity of SSA and compare it with standard attention.

\begin{proposition}[Complexity of SSA]\label{prop:complexity}
The computational complexity of SSA is:
\begin{enumerate}
    \item \emph{Dense computation:} $O(N^2 d)$, matching standard attention. The frequency-dependent coupling $J_{ij} = \exp(-\alpha\|\boldsymbol{\omega}_i - \boldsymbol{\omega}_j\|^2)$ is computed directly from the pairwise frequency differences at no additional asymptotic cost, and the empirical order parameter $r$ requires $O(NHd)$.
    \item \emph{Sparse coupling with top-$k$:} $O(N k d)$ where $k \ll N$, achieving linear scaling in sequence length.
    \item \emph{Phase-locking sparsity:} When natural frequencies are spread over a range $2\Omega$ with coupling threshold $KrJ$, the expected number of non-zero entries per row is $O(N \cdot KrJ / \Omega)$, yielding effective complexity $O(N^2 \cdot KrJ / \Omega \cdot d)$.
\end{enumerate}
\end{proposition}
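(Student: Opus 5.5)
The plan is to prove the three claims by direct operation counting, one per item, assuming $H$ heads of dimension $d = D/H$ and working per head unless stated otherwise.

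\emph{Item 1 (dense computation).}
1. The projections $\boldsymbol{\omega}_i = \mat{W}_\omega \vect{x}_i$, $\boldsymbol{\theta}_i = \mat{W}_\theta \vect{x}_i$ and $\mat{V}=\mat{X}\mat{W}_V$ cost $O(NDd)$ per head. This is linear in $N$, so it is dominated by the pairwise stage.
2. For the pairwise stage, expand $\|\boldsymbol{\omega}_i-\boldsymbol{\omega}_j\|_2^2 = \|\boldsymbol{\omega}_i\|^2+\|\boldsymbol{\omega}_j\|^2-2\boldsymbol{\omega}_i^\top\boldsymbol{\omega}_j$. The Gram matrix costs $O(N^2 d)$, exactly the cost of $\mat{Q}\mat{K}^\top$ in Eq.~(\ref{eq:std_attn}), and the norms cost $O(Nd)$.
3. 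From the squared distances, $\Delta\omega_{ij}$, $J_{ij}$ from Eq.~(\ref{eq:coupling}), and $S_{ij}$ from Eq.~(\ref{eq:practical_S}) are all $O(1)$ elementwise operations per entry. This adds only $O(N^2)$, so $J$ comes at no additional asymptotic cost.
4. The order parameter in Eq.~(\ref{eq:empirical_r}) is a sum over $N$ tokens of $d$ complex exponentials followed by $d$ moduli, i.e.\ $O(Nd)$ per head and $O(NHd)$ in total.
5. The aggregation in Eq.~(\ref{eq:output}) is a matrix product $\mat{S}\mat{V}$ plus row sums, costing $O(N^2 d)$.
6. Summing gives $O(N^2 d)$ per head, matching standard attention.

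\emph{Item 2 (top-$k$ sparse coupling).}
1. If each row keeps only $k$ candidate neighbors, evaluating $J_{ij}$, $S_{ij}$ and the weighted sum over $\vect{v}_j$ costs $O(kd)$ per row, hence $O(Nkd)$ in total.
2. The neighbor set must itself be found without forming all $N^2$ distances. I would state explicitly that the candidate set is supplied by a structured pattern (e.g.\ a local window) or by a sub-quadratic approximate nearest-neighbor search in frequency space, whose cost I assume is $O(Nkd)$ up to logarithmic factors.
3. I expect this to be the main subtlety: exact top-$k$ selection is otherwise quadratic, so the linear bound holds only under that assumption, and the write-up should say so.

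\emph{Item 3 (phase-locking sparsity).}
1. A nonzero $S_{ij}$ requires $\Delta\omega_{ij}\le KrJ_{ij}\le Kr$, since $J_{ij}\le 1$. For a fixed row, the admissible $\boldsymbol{\omega}_j$ lie in an interval of width at most $2KrJ$ around $\omega_i$ (scalar case first).
2. Model the frequencies as roughly uniform on a range of width $2\Omega$. The probability that a given $j$ falls in that interval is at most $2KrJ/(2\Omega)$, so linearity of expectation gives $O(N\cdot KrJ/\Omega)$ expected nonzeros per row.
3. Multiplying by $N$ rows and by $O(d)$ work per nonzero entry gives the stated $O(N^2\cdot KrJ/\Omega\cdot d)$.
4. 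Two caveats belong in the proof:
   \begin{itemize}
   \item In $d$ dimensions the ball-volume ratio scales like $(KrJ/\Omega)^d$, which is even sparser, so the scalar bound remains a valid upper bound.
   \item Realizing this cost requires locating the nonzeros without a dense scan, e.g.\ by sorting the frequencies, which costs an extra $O(N\log N)$ in 1D.
   \end{itemize}
   Here $J$ should be read as a representative, or maximal, coupling value.
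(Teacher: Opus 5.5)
Your proposal is correct and uses the same direct operation count as the paper. The paper's entire justification is the short paragraph after the proposition: $O(NDd)$ projections, one $O(N^2d)$ pass over $\Delta\omega_{ij}$ that supplies both $J_{ij}$ and the locking test, and then an unargued reduction to $O(Nkd)$ ``when combined with top-$k$ sparsification or when the phase-locking condition provides sufficient natural sparsity.'' The extra hypotheses you state for items 2 and 3 are not gaps but exactly what the paper omits: a sub-quadratic way to find the $k$ candidates or the locked pairs, and a roughly uniform, bounded frequency density. Its Algorithm~1 (Appendix~\ref{app:pseudocode}) builds the dense $\mathbf{S}$ before applying the locking mask and top-$k$, so as implemented both costs stay $\Theta(N^2 d)$. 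This matches Table~\ref{tab:benchmark}, where the $k=64$ variant is slower than the dense one at every sequence length.
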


The frequency projection and phase projection each require $O(NDd)$ computation. The pairwise frequency difference $\Delta\omega_{ij}$ is $O(N^2 d)$ and simultaneously provides both the coupling matrix $J$ (via the Gaussian kernel) and the synchronization condition, eliminating the need for a separate coupling computation. When combined with top-$k$ sparsification or when the phase-locking condition provides sufficient natural sparsity, the overall complexity reduces to $O(Nkd)$ for effective neighborhood size $k$.

\subsection{Theoretical Properties}\label{sec:theory}

We state three theoretical results characterizing the expressiveness and properties of SSA: two theorems establishing universality and natural sparsity, and a proposition characterizing the emergent positional encoding bias. Proofs are provided in Appendix~\ref{app:proofs}.

\begin{theorem}[Universality]\label{thm:universality}
For any continuous function $f: \mathcal{X} \to \R^{N \times D}$ on a compact domain $\mathcal{X} \subset \R^{N \times D}$ and any $\epsilon > 0$, there exists an OSN with sufficient depth $L$ and width $D$ such that $\|\text{OSN}(\mat{X}) - f(\mat{X})\| < \epsilon$ for all $\mat{X} \in \mathcal{X}$.
\end{theorem}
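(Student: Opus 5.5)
The plan is to follow the standard route for Transformer universality (Yun et al.): show that the attention sublayer can be turned off, and then let the feed-forward sublayers do the approximation.

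\emph{Step 1: isolate the FFN sublayer.} The OSN block is pre-norm and residual (Eqs.~\ref{eq:osn_attn}--\ref{eq:osn_ffn}). If the MFSH output can be made identically zero, the block reduces to $\mat{X}\mapsto \mat{X}+\text{FFN}(\text{LayerNorm}(\mat{X}))$ acting on each token. This is easy to arrange by setting $\mat{W}_O=0$, or $\mat{W}_V=0$, since $\vect{y}_i$ in Eq.~\ref{eq:output} is a convex-type combination of value vectors. Every parameter choice is admissible even though $S$ may contain zeros, because the $\epsilon$ in the denominator keeps everything well defined.

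\emph{Step 2: token-wise approximation.} A stack of such blocks is a residual network of token-wise two-layer GELU networks. One caveat is that LayerNorm destroys the mean and scale of its input. The first FFN can work around this by writing a constant "anchor" coordinate and a rescaled copy of the token into spare width. Alternatively, one can absorb LayerNorm into the FFN input weights on the compact image set, where LayerNorm is a continuous injection after an affine lift. With this in place, the classical universal approximation theorem for GELU networks shows that a single wide FFN approximates any continuous token-wise map $g:\R^D\to\R^D$ on a compact set. This already handles $f$ of the form $f(\mat{X})_i=g(\vect{x}_i)$.

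\emph{Step 3: the contextual mapping (the main obstacle).} A general $f$ depends on the whole sequence, so the SSA layers must mix tokens. I would proceed in three substeps.
\begin{enumerate}
\item Use the FFNs to quantize each token onto a grid $\delta\mathbb{Z}^D$. This is possible because piecewise-constant maps can be approximated by FFNs away from the grid boundaries, and that boundary set can be made of small measure. Getting uniform rather than merely $L^p$ error then requires the usual extra care.
\item Use SSA heads with tiny $\alpha_h$, so that $J_{ij}\approx1$, and large $K$, so that the phase-locking condition holds for every pair and $S_{ij}\approx 1-\Delta\omega_{ij}^2/(Kr)^2$. With $\mat{W}_\omega$ chosen so that $\omega_i$ encodes a scalar hash $u_i$ of the quantized token, each head output becomes a smooth, nonconstant, permutation-sensitive function of all the $u_j$. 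For example, the normalized weighted average $\sum_j(1-c(u_i-u_j)^2)v_j/\sum_j(\cdots)$ is, to first order in $c$, a polynomial in the empirical moments of $\{u_j\}$ evaluated at $u_i$.
\item Stack several such layers with different scales $c$, interleaved with FFNs. The goal is that distinct quantized input sequences produce distinct values per token, i.e.\ a contextual mapping in Yun et al.'s sense. Injectivity on the finite grid follows from genericity: finitely many polynomial identities fail for generic parameters.
\end{enumerate}

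Positional information is a gap in this plan. It is not automatically present, since SSA is permutation-equivariant given $\mat{X}$, so I would either assume the domain includes positional content in $\mat{X}$, or restrict to permutation-equivariant $f$. Once tokens carry unique context IDs, a final FFN stack maps each ID to the required output value, and the total error is the quantization error plus the FFN approximation error, both of which are below $\epsilon$ for fine enough $\delta$ and sufficient depth and width.
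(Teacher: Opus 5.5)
Your route differs from the paper's. The paper claims SSA can realize any binary attention pattern (identical frequencies inside a cluster, well-separated frequencies across clusters) and then imports sparse-Transformer universality from Yun et al. You instead try to build a Yun-style contextual mapping directly from the smooth SSA kernel in the fully phase-locked regime. That is closer to what a proof needs. The cited results use input-dependent softmax weights, not just the support of the attention matrix. The paper's cluster construction only produces symmetric, transitive patterns with uniform weights, and these are fixed by token content because $\boldsymbol{\omega}_i=\mat{W}_\omega\vect{x}_i$.

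Your remark on positions is correct, and it is more than a gap in your plan. $\boldsymbol{\omega}_i$, $\boldsymbol{\theta}_i$ and $\vect{v}_i$ depend only on $\vect{x}_i$, $r$ is permutation-invariant, and LayerNorm and the FFN act token-wise. Hence $\text{OSN}(\pi\mat{X})=\pi\,\text{OSN}(\mat{X})$, and the theorem is false for non-equivariant targets such as $f(\mat{X})_i=\vect{x}_1$. The paper's proof misses this because Yun et al.'s non-equivariant result relies on positional encodings, which OSN does not have.

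\textbf{Step~2 (LayerNorm).} This is a genuine gap, and it cannot be patched inside the architecture as defined. Every sublayer, SSA and FFN alike, sees only the LayerNorm of the residual stream, and $\text{LayerNorm}(\vect{x}+t\mathbf{1})=\text{LayerNorm}(\vect{x})$ exactly. Inducting over Eqs.~\ref{eq:osn_attn}--\ref{eq:osn_ffn} gives $\text{OSN}(\mat{X}+\mat{T})=\text{OSN}(\mat{X})+\mat{T}$ for every $\mat{T}$ whose rows are multiples of $\mathbf{1}$. The consequences are:
\begin{enumerate}
\item No FFN ever sees the mean of $\vect{x}_i$, so your first FFN cannot write a rescaled copy of the token.
\item There is no affine lift in front of the first LayerNorm into which LayerNorm could be absorbed.
\item Even the token-wise target $g\equiv 0$ cannot be approximated to any $\epsilon<\|\mat{T}\|/2$ once $\mathcal{X}$ contains both $\mat{X}$ and $\mat{X}+\mat{T}$, e.g.\ whenever $\mathcal{X}$ has interior.
\item The same identity rules out your quantization substep, since the FFN would have to output $q(\vect{x})-\vect{x}$, which is not a function of $\text{LayerNorm}(\vect{x})$.
\end{enumerate}
So your restricted statement (equivariant $f$, or positional content in $\mat{X}$) is still false. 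To repair it you must either change the model, e.g.\ add an input embedding with a constant coordinate plus an output projection, or drop LayerNorm as Yun et al.\ do. Alternatively, restrict $f$ further to satisfy $f(\mat{X}+\mat{T})=f(\mat{X})+\mat{T}$.

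\textbf{Step~3 (contextual mapping).} Even after such a repair, this step carries the whole theorem and is only asserted.
\begin{enumerate}
\item In the equivariant setting, identical tokens of one sequence must share an ID, so Yun et al.'s distinct-ID condition must be weakened. The right target is injectivity on pairs consisting of $\vect{x}_i$ and the multiset $\{\vect{x}_1,\dots,\vect{x}_N\}$.
\item Genericity then requires that, for each of the finitely many pairs of configurations to be separated, the corresponding analytic function of the parameters is not identically zero. That is exactly what is missing.
\item Your first-order expansion exposes only a few empirical moments of $\{u_j\}$, whereas determining the multiset needs on the order of $N$ of them. You would have to show that outputs across scales, heads and layers, including the normalization by $\sum_j S_{ij}$, actually recover them.
\end{enumerate}
Finally, the passage from $L^p$ to uniform error is not routine care. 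Quantization-based constructions, Yun et al.'s included, give only $L^p$ approximation, and that is all the result cited by the paper provides. The $\sup$-norm claim therefore needs an additional argument that neither you nor the paper supplies.
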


\begin{theorem}[Natural Sparsity]\label{thm:sparsity}
Let the natural frequencies $\omega_1, \ldots, \omega_N$ be drawn i.i.d.\ from a distribution $g$ with support $[-\Omega, \Omega]$. With uniform coupling $J_{ij} = J$, the expected fraction of non-zero entries in $\mat{S}$ is:
\begin{multline}
    \E\!\left[\frac{|\{(i,j) : S_{ij} > 0\}|}{N^2}\right] \\
    = \frac{2KrJ}{\Omega} - \left(\frac{KrJ}{\Omega}\right)^{2} + O\!\left(\frac{K^3r^3J^3}{\Omega^3}\right).
\end{multline}
\end{theorem}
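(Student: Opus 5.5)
The plan is to reduce the expected density of $\mat{S}$ to a single probability about the difference of two independent draws from $g$, and then evaluate that probability. With $J_{ij}=J$ fixed, Eq.~\eqref{eq:phase_align_sqrt} gives $S_{ij}>0$ exactly when $\Delta\omega_{ij}<KrJ$. The boundary case $\Delta\omega_{ij}=KrJ$ gives $S_{ij}=0$, but it has probability zero for a continuous $g$.

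I treat $K$, $J$ and $r$ as fixed while taking the expectation over the frequencies. This is legitimate for $K$ and $J$, which are parameters. For $r$ it holds because, by Eq.~\eqref{eq:empirical_r}, $r$ is built only from the phases $\boldsymbol{\theta}_j$. I would state explicitly that these phases are taken independent of the $\omega$'s, or conditioned on. Writing $t=KrJ$, linearity of expectation gives
\[
\E\!\left[\frac{|\{(i,j):S_{ij}>0\}|}{N^2}\right]=\frac{1}{N}+\Bigl(1-\frac{1}{N}\Bigr)P\bigl(|\omega_1-\omega_2|<t\bigr).
\]
The diagonal always contributes, since $S_{ii}=J>0$. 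That $1/N$ term is absorbed as a finite-size correction, so the main content is $p(t):=P(|\omega_1-\omega_2|<t)$.

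Next I would compute $p(t)$ from the density $h=g*g(-\cdot)$ of $\delta=\omega_1-\omega_2$. The density $h$ is even and supported on $[-2\Omega,2\Omega]$, so $p(t)=2\int_0^t h(\delta)\,d\delta$.

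For a uniform $g$, $h$ is a triangle and $p$ is exactly a quadratic in $t$. The linear coefficient is $2h(0)=2\int g^2$. The negative quadratic coefficient comes from the kink of the triangle at $0$.

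Matching the stated constants $2t/\Omega-(t/\Omega)^2$ requires the uniform law on an interval of length $\Omega$. Carrying the computation out literally on $[-\Omega,\Omega]$ gives instead $t/\Omega-\tfrac14(t/\Omega)^2$. I would therefore fix the convention that $\Omega$ is the width of the support, equivalently rescale $\Omega$. The expansion is then exact for $t\le\Omega$, and the $O((t/\Omega)^3)$ remainder is trivially satisfied.

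For a general bounded $g$, I would expand $h$ near $0$ to get $p(t)=2h(0)t+\tfrac12 h'(0^+)t^2+O(t^3)$. The leading order $\Theta(KrJ/\Omega)$, i.e.\ the natural-sparsity conclusion, then holds for any density bounded by $O(1/\Omega)$.

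The main obstacle is not the integration but the bookkeeping of hypotheses. First, the exact constants are specific to the uniform $g$ and to the width convention for $\Omega$. A smooth $g$ has $h'(0)=0$, which would kill the quadratic term. Second, the result needs the independence or conditioning on $r$ described above. Third, the diagonal and the $O(1/N)$ term must be handled.

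I would state these assumptions at the start of the proof. The vector case $\boldsymbol{\omega}_i\in\R^d$ would be noted as following the same pattern, with $p(t)$ replaced by a ball probability of order $(t/\Omega)^d$. I would not pursue the vector case further.
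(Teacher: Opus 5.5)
Your analysis is sound and follows the paper's route. Both of you reduce the expected density to the single-pair probability $p=P(|\omega_i-\omega_j|\le KrJ)$ and then evaluate it for uniform $g$. You do this through the triangular density of $\omega_1-\omega_2$. The paper does it through a bulk/boundary split of $\int g(\omega)\,[G(\omega+\delta)-G(\omega-\delta)]\,d\omega$ with $\delta=KrJ$.

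The discrepancy you flag is genuine, and it is exactly where the paper's proof fails. Its bulk term $\delta(\Omega-\delta)/\Omega^2$ and boundary term $3\delta^2/(4\Omega^2)$ correctly sum to $p=\delta/\Omega-\delta^2/(4\Omega^2)$, which is your value. The paper then writes this as $\approx 2KrJ/\Omega-(KrJ/\Omega)^2$ ``to leading order.'' That is wrong by a factor of $2$ in the leading term, so its illustrative $19\%$ and $2\%$ should be roughly $10\%$ and $1\%$.

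The statement as written is therefore false, and your repairs are what it actually needs:
\begin{itemize}
\item taking the support to have width $\Omega$;
\item treating $r$ explicitly as independent of, or conditioned against, the $\omega$'s;
\item restricting the exact constants to uniform $g$, which is the only case the paper ever computes despite its general-$g$ statement;
\item the $1/N+(1-1/N)p$ diagonal accounting, which the paper skips by setting the expectation equal to $p$.
\end{itemize}

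One small slip is in your side remark on general $g$. Since $p(t)=2\int_0^t h$, the expansion is $p(t)=2h(0)t+h'(0^+)t^2+O(t^3)$, not $2h(0)t+\tfrac12 h'(0^+)t^2+O(t^3)$. For the uniform law on $[-\Omega,\Omega]$ one has $h'(0^+)=-1/(4\Omega^2)$. Only the corrected coefficient reproduces your own $-\tfrac14(t/\Omega)^2$; yours would give half of it. That expansion also needs $g$ to be, say, piecewise $C^1$, not merely bounded. Neither point affects your main computation, which is exact.
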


\begin{proposition}[Positional Encoding Bias]\label{thm:pe_emerges}
Under gradient descent on a language modeling objective, the SSA mechanism induces a gradient bias that drives the natural frequencies of positionally proximate tokens toward smaller mutual differences. Specifically, for synchronized pairs $(i,j)$ with $S_{ij} > 0$, the gradient $\partial\mathcal{L}/\partial\boldsymbol{\omega}_i$ contains a term proportional to $-(\partial\mathcal{L}/\partial S_{ij}) \cdot (\boldsymbol{\omega}_i - \boldsymbol{\omega}_j) / (\Delta\omega_{ij} \cdot (KrJ_{ij})^2)$, which is negative (attracting $\boldsymbol{\omega}_i$ toward $\boldsymbol{\omega}_j$) when increasing $S_{ij}$ reduces the loss. Since nearby tokens are generally more predictive in language modeling, this gradient bias produces frequency configurations where $\Delta\omega_{ij}$ correlates positively with positional distance $|i - j|$, yielding an emergent positional encoding.
\end{proposition}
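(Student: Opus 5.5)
The argument has two parts: a chain-rule computation of the gradient term, and a heuristic argument that converts the sign of that term into a positional ordering of the frequencies. For the first part we fix a head, treat $K$ and $r$ as constants (the dependence of $r$ on the phases $\boldsymbol{\theta}$ does not involve $\boldsymbol{\omega}$), and restrict to a synchronized pair with $0<\Delta\omega_{ij}<KrJ_{ij}$. On this open region $S_{ij}$ is smooth. We write $u=\Delta\omega_{ij}/(KrJ_{ij})$, so that
\[
S_{ij}=J_{ij}\sqrt{1-u^2}.
\]
Two dependences on $\boldsymbol{\omega}_i$ have to be separated: the explicit one through $\Delta\omega_{ij}$, and the implicit one through $J_{ij}=\exp(-\alpha\Delta\omega_{ij}^2)$. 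The explicit term is
\[
\frac{\partial S_{ij}}{\partial \Delta\omega_{ij}}\Big|_{J}=-\frac{J_{ij}\,u}{\sqrt{1-u^2}}\cdot\frac{1}{KrJ_{ij}}=-\frac{\Delta\omega_{ij}}{(KrJ_{ij})^2}\cdot\frac{J_{ij}}{\sqrt{1-u^2}}.
\]
We then use $\partial\Delta\omega_{ij}/\partial\boldsymbol{\omega}_i=(\boldsymbol{\omega}_i-\boldsymbol{\omega}_j)/\Delta\omega_{ij}$ and multiply by $\partial\mathcal{L}/\partial S_{ij}$. This isolates a term in $\partial\mathcal{L}/\partial\boldsymbol{\omega}_i$ of the form $-c_{ij}\,(\partial\mathcal{L}/\partial S_{ij})\,(\boldsymbol{\omega}_i-\boldsymbol{\omega}_j)/(\Delta\omega_{ij}(KrJ_{ij})^2)$, with the positive factor $c_{ij}=J_{ij}\Delta\omega_{ij}^2/\sqrt{1-u^2}$. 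The statement asks only for proportionality, so we absorb $c_{ij}$. The $J$-pathway contributes $-2\alpha J_{ij}(\boldsymbol{\omega}_i-\boldsymbol{\omega}_j)\,\partial S_{ij}/\partial J_{ij}$. We will check that $\partial S_{ij}/\partial J_{ij}=\sqrt{1-u^2}+u^2/\sqrt{1-u^2}>0$, so this contribution is also collinear with $\boldsymbol{\omega}_i-\boldsymbol{\omega}_j$ and carries the same sign. Both pathways therefore reinforce each other.

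Next comes the sign analysis. If $\partial\mathcal{L}/\partial S_{ij}<0$, meaning that increasing $S_{ij}$ reduces the loss, then the gradient step $-\eta\,\partial\mathcal{L}/\partial\boldsymbol{\omega}_i$ has a component along $-(\boldsymbol{\omega}_i-\boldsymbol{\omega}_j)$. That component strictly decreases $\Delta\omega_{ij}$ to first order in $\eta$, which is the attraction claim. A symmetric argument applies to $\boldsymbol{\omega}_j$. We also need the fact that $\partial\mathcal{L}/\partial S_{ij}$ inherits its sign from the normalized aggregation in Eq.~(\ref{eq:output}): it equals $\langle \partial\mathcal{L}/\partial\vect{y}_i,\ (\vect{v}_j-\vect{y}_i)\rangle/(\sum_k S_{ik}+\epsilon)$. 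It is therefore negative exactly when moving $\vect{y}_i$ toward $\vect{v}_j$ helps the objective.

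The final step is the main obstacle, because it is not a theorem under the stated hypotheses. We must pass from ``pairs whose value vectors are useful get attracted'' to ``$\Delta\omega_{ij}$ correlates positively with $|i-j|$.'' The plan is to make the locality assumption explicit and treat this step as a modeling claim. Suppose the expected usefulness $-\E[\partial\mathcal{L}/\partial S_{ij}]$ is a decreasing function $\rho(|i-j|)$ of distance. Then the expected attractive force on pair $(i,j)$ is decreasing in $|i-j|$. We would study the expected gradient flow on $\{\boldsymbol{\omega}_i\}$ as an energy
\[
\sum_{i,j}\rho(|i-j|)\,\Phi(\Delta\omega_{ij}),
\]
with $\Phi$ increasing. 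Its minimizers, subject to a spread constraint that prevents collapse, arrange frequencies so that distance in $\boldsymbol{\omega}$ is monotone in $|i-j|$. This mirrors a one-dimensional embedding or Laplacian-eigenmap argument.

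Two difficulties remain. First, the frequencies are $\boldsymbol{\omega}_i=\mat{W}_\omega\vect{x}_i$, so they are tied to token content rather than to position. Second, unsynchronized pairs receive zero gradient. Given these, we would present the correlation conclusion as holding in expectation and under the locality assumption, not as a rigorous consequence.
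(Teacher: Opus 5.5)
Your proposal is correct and follows the paper's own argument: the same chain rule through $\Delta\omega_{ij}$ with the same expression for $\partial S_{ij}/\partial\Delta\omega_{ij}$, the same attractive/repulsive sign analysis, and the same heuristic locality step, which the paper also labels a gradient-bias analysis rather than a convergence proof. Beyond the paper, you explicitly check that the $J_{ij}$-pathway reinforces the attraction (indeed $\partial S_{ij}/\partial J_{ij}=1/\sqrt{1-u^2}>0$), and you raise caveats about content-tied frequencies and vanishing gradients for unlocked pairs that the paper does not address; one harmless slip is that the positive factor is $c_{ij}=J_{ij}\Delta\omega_{ij}/\sqrt{1-u^2}$, not $J_{ij}\Delta\omega_{ij}^2/\sqrt{1-u^2}$, which does not affect the sign argument.
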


%% ============================================================
%%  4. EXPERIMENTS
%% ============================================================
\section{Experiments}\label{sec:experiments}

\subsection{Experimental Setup}\label{sec:setup}

We evaluate the computational properties of single OSN and standard Transformer blocks on an NVIDIA A100 GPU (Google Colab), measuring throughput, latency, and peak GPU memory across sequence lengths $N \in \{128, 256, 512, 1024, 2048, 4096\}$. We compare three configurations: a standard Transformer block~\cite{vaswani2017attention}, an OSN block with dense synchronization (no top-$k$ sparsification), and an OSN block with sparse synchronization ($k = 64$). The benchmark uses $D = 512$, $H = 8$ heads, with batch sizes adapted per sequence length to avoid out-of-memory errors ($B = 8$ for $N \leq 512$, $B = 4$ for $N = 1024$, $B = 2$ for $N = 2048$, $B = 1$ for $N = 4096$). Each configuration is evaluated over 50 forward-pass trials with 10 warmup iterations. We additionally characterize the structural properties of SSA through analysis of synchronization matrices and the empirical order parameter on randomly initialized models.

\subsection{Computational Benchmark}\label{sec:efficiency}

We benchmark single-block implementations of OSN against a standard Transformer block on an NVIDIA A100 GPU (Google Colab) to characterize the computational properties of the SSA mechanism. Figure~\ref{fig:benchmark} and Table~\ref{tab:benchmark} present the results across six sequence lengths.

\paragraph{Parameter efficiency.}
The OSN block achieves near-identical parameter counts to the standard Transformer block: 3{,}152{,}393 vs.\ 3{,}152{,}384, a difference of only 9 parameters arising from the per-head coupling bandwidth $\alpha_h$ ($H = 8$ parameters) and global coupling strength $K$ (1 parameter). This parity results from the frequency-dependent coupling design: unlike approaches that introduce a separate coupling network, the coupling $J_{ij} = \exp(-\alpha_h\|\boldsymbol{\omega}_i - \boldsymbol{\omega}_j\|^2)$ reuses the frequency projections already computed for the synchronization operator.

\paragraph{Throughput and latency.}
The standard Transformer achieves 1.6--3.2$\times$ higher throughput than OSN (dense) across the tested sequence lengths (Table~\ref{tab:benchmark}), with the gap widening at longer sequences. This overhead arises from two factors: (i) SSA computes multiple $N \times N$ intermediate tensors, pairwise frequency distances $\Delta\omega_{ij}^2$, coupling matrix $\mat{J}$, synchronization matrix $\mat{S}$, and the phase-locking mask, whereas standard attention computes only the single $\mat{Q}\mat{K}^\top$ score matrix; and (ii) the Transformer baseline benefits from highly optimized CUDA kernels (cuBLAS GEMM, fused softmax) developed over years, whereas our SSA implementation uses standard PyTorch operations without kernel-level optimization. The sparse variant ($k = 64$) incurs additional overhead from the top-$k$ selection, yielding throughput 1.6--4.1$\times$ below the Transformer. This gap is an engineering challenge, analogous to the gap between early Transformer implementations and their subsequent optimization through FlashAttention~\cite{dao2022flashattention}, rather than a fundamental limitation, as the SSA computation is amenable to kernel fusion and custom CUDA implementations.

\paragraph{Memory.}
OSN (dense) requires 1.1--4.0$\times$ more peak GPU memory than the Transformer, scaling from 345~MB at $N = 128$ to 5{,}465~MB at $N = 4096$ (Table~\ref{tab:benchmark}). The overhead stems from maintaining multiple $N \times N$ intermediate tensors simultaneously. Both architectures exhibit the expected $O(N^2)$ memory scaling in dense mode. The sparse variant adds minimal memory overhead ($< 1\%$) over the dense variant. Memory-efficient implementations analogous to FlashAttention, computing the synchronization matrix in tiles without materializing the full $N \times N$ intermediates, could substantially reduce this overhead.

\begin{figure*}[t]
\centering
\includegraphics[width=\textwidth]{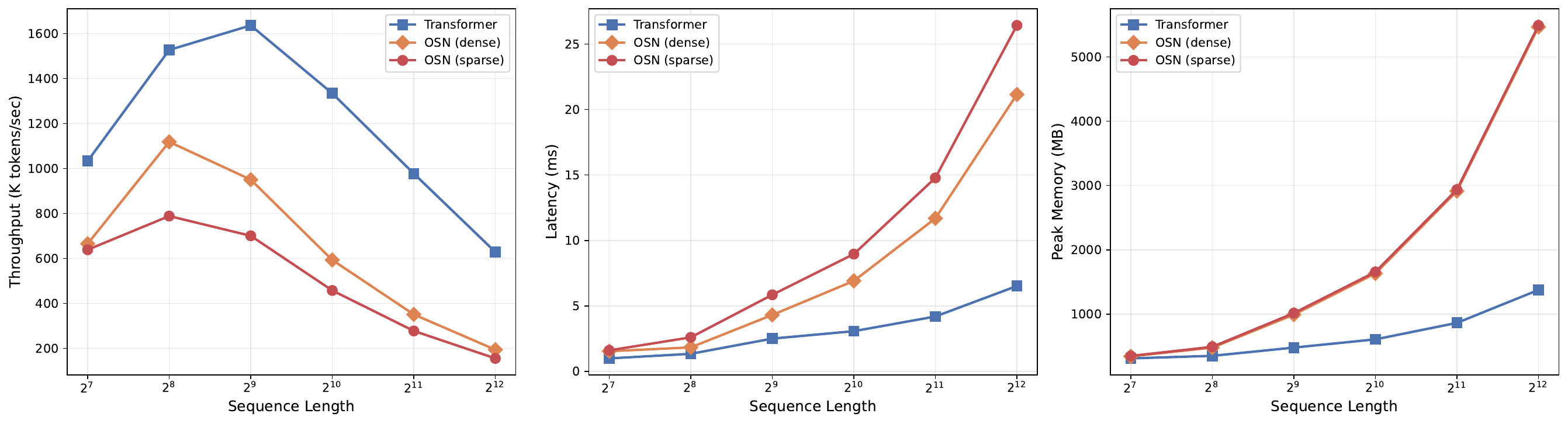}
\caption{Single-block GPU benchmark (NVIDIA A100, Google Colab). \textbf{Left:} Throughput (K tokens/sec) vs.\ sequence length. \textbf{Center:} Forward pass latency (ms). \textbf{Right:} Peak GPU memory (MB). Batch sizes adapted per sequence length ($B = 8$ for $N \leq 512$; $B = 4, 2, 1$ for $N = 1024, 2048, 4096$). Configuration: $D = 512$, $H = 8$, $k = 64$ for sparse variant. Each measurement averaged over 50 trials with 10 warmup iterations.}
\label{fig:benchmark}
\end{figure*}

\begin{table}[t]
\caption{Single-block GPU benchmark (NVIDIA A100). $N$ = sequence length, $B$ = batch size. TF = Transformer, D = OSN dense, S = OSN sparse ($k\!=\!64$). Throughput in K tokens/sec ($\uparrow$); peak memory in MB ($\downarrow$).}
\label{tab:benchmark}
\begin{ruledtabular}
\begin{tabular}{r@{\hspace{5pt}}c@{\hspace{5pt}}r@{\hspace{4pt}}r@{\hspace{4pt}}r@{\hspace{6pt}}r@{\hspace{4pt}}r@{\hspace{4pt}}r}
 & & \multicolumn{3}{c}{\textbf{K tok/s ($\uparrow$)}} & \multicolumn{3}{c}{\textbf{Memory ($\downarrow$)}} \\
$N$ & $B$ & TF & D & S & TF & D & S \\
\midrule
128 & 8 & 1033 & 665 & 638 & 313 & 345 & 351 \\
256 & 8 & 1527 & 1118 & 788 & 353 & 481 & 493 \\
512 & 8 & 1636 & 950 & 700 & 481 & 993 & 1017 \\
1024 & 4 & 1334 & 593 & 457 & 609 & 1633 & 1657 \\
2048 & 2 & 977 & 350 & 277 & 865 & 2913 & 2937 \\
4096 & 1 & 628 & 194 & 155 & 1377 & 5465 & 5489 \\
\end{tabular}
\end{ruledtabular}
\end{table}

\subsection{Structural Analysis}\label{sec:emergent}

We analyze the structural properties of the SSA operator through forward-pass characterization on randomly initialized models, demonstrating the architectural inductive biases that SSA provides prior to task-specific training.

\paragraph{Synchronization matrices.}
Figure~\ref{fig:sync_matrices} visualizes the synchronization matrices $\mat{S}^{(h)}$ from all eight Multi-Frequency Synchronization Heads for a single forward pass through a randomly initialized OSN block ($D = 512$, $H = 8$, $N = 512$). Several structural properties are apparent. First, the diagonal entries are maximal ($S_{ii} = 1$), reflecting perfect self-synchronization: each token has zero frequency mismatch with itself, yielding $J_{ii} = 1$ and $S_{ii} = 1$. Second, the off-diagonal entries exhibit structured variation determined by the frequency-dependent coupling and phase-locking condition, producing non-uniform synchronization patterns even at initialization. Third, the eight heads display distinct synchronization profiles, demonstrating that the multi-frequency head structure promotes representational diversity without explicit regularization. These properties contrast with standard Transformer attention at random initialization, which produces approximately uniform weights ($A_{ij} \approx 1/N$) due to the concentration of random dot products under softmax normalization; the structured, non-uniform patterns of SSA represent a stronger architectural inductive bias that may facilitate learning.

\begin{figure*}[t]
\centering
\includegraphics[width=\textwidth]{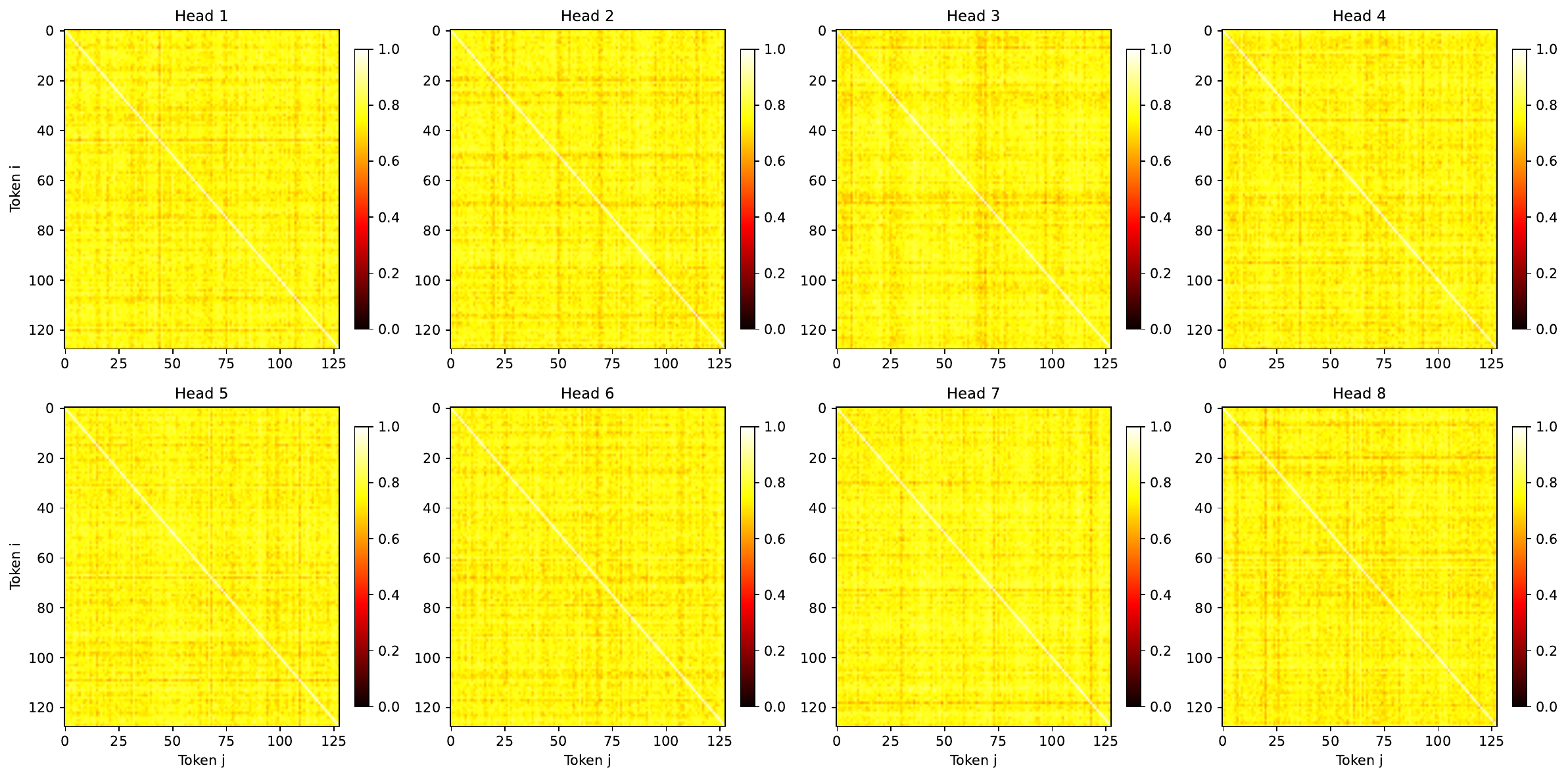}
\caption{Synchronization matrices $\mat{S}^{(h)}$ from eight Multi-Frequency Synchronization Heads for a randomly initialized OSN block ($D = 512$, $H = 8$, NVIDIA A100). Each panel shows the first $128 \times 128$ tokens of a 512-length sequence. The diagonal reflects self-synchronization ($S_{ii} = 1$), while off-diagonal values encode the frequency-dependent coupling and phase-locking condition. Different heads exhibit distinct synchronization profiles, demonstrating multi-frequency head diversity.}
\label{fig:sync_matrices}
\end{figure*}

\paragraph{Empirical order parameter.}
Figure~\ref{fig:order_param} shows the distribution of the empirically computed order parameter $r$ (Eq.~\ref{eq:empirical_r}) across all eight synchronization heads, measured over 100 random input samples ($N = 256$). The order parameter concentrates tightly around $r \approx 0.847$ with standard deviation $\sigma \approx 0.002$. The high initial value is consistent with the small-variance weight initialization (std $= 0.02$), which produces phases concentrated near zero and correspondingly high phase coherence. This results in broad initial synchronization, most token pairs pass the phase-locking condition, providing a curriculum-like inductive bias: attention starts broad at initialization and is expected to progressively sharpen as training drives frequency diversification, mirroring the common observation that learned attention patterns begin diffuse and become more selective during training. The narrow variance of $r$ across inputs and heads confirms that the empirical order parameter provides a stable measure of collective phase coherence, validating the design choice of computing $r$ from the phase distribution (Eq.~\ref{eq:empirical_r}) rather than treating it as a free scalar parameter.

\begin{figure}[!htbp]
\centering
\includegraphics[width=\columnwidth]{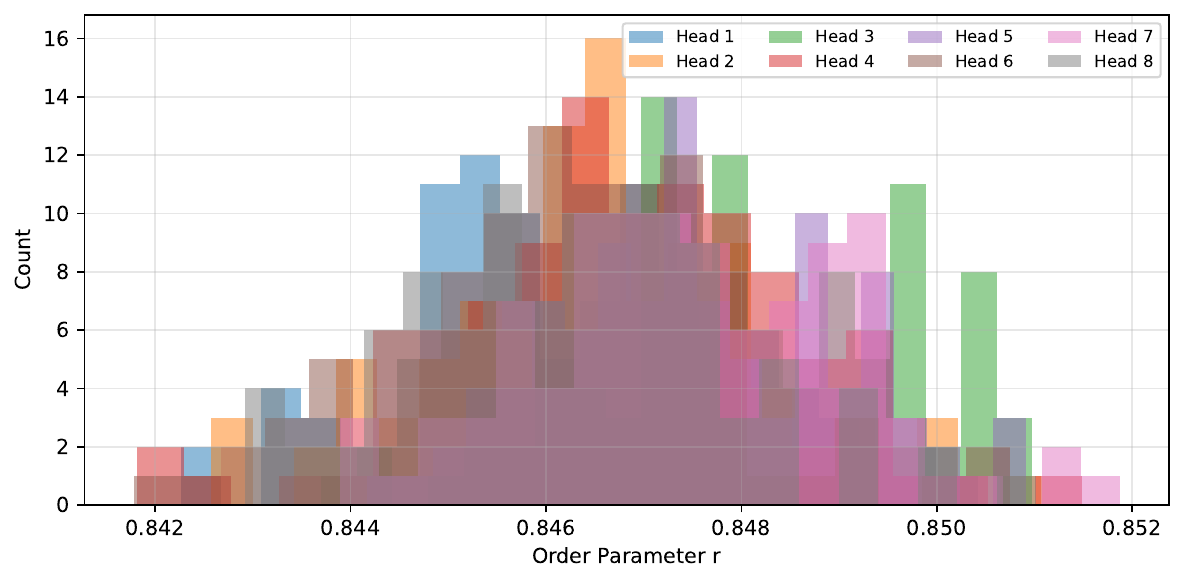}
\caption{Distribution of the empirical order parameter $r$ across eight synchronization heads, computed from 100 random input samples ($N = 256$, $D = 512$, NVIDIA A100). The order parameter concentrates around $r \approx 0.847$ with low variance ($\sigma \approx 0.002$), demonstrating stable emergent phase coherence consistent with the Kuramoto formulation.}
\label{fig:order_param}
\end{figure}

%% ============================================================
%%  5. DISCUSSION
%% ============================================================
\section{Discussion}\label{sec:discussion}

SSA provides a biologically motivated alternative to dot-product attention that draws directly on the Communication Through Coherence framework~\cite{fries2005mechanism,fries2015rhythms}. In standard Transformers, the attention weight between two tokens is determined by the similarity of their learned query and key representations, a computation that, while effective, lacks a clear analogue in biological neural systems. SSA replaces this with a mechanism rooted in the physics of coupled oscillators: the synchronization strength between two tokens depends on the compatibility of their natural frequencies and the strength of their coupling, mirroring the way biological neural populations gate information flow through phase coherence. The natural frequency representation unifies positional and semantic information in a single learned spectrum, and the phase-locking condition provides automatic sparsity that is principled rather than heuristic.

A notable advantage of the synchronization-based formulation is its inherent interpretability. In a trained OSN, one can directly inspect which tokens are ``in phase'' and thus exchanging information, and which are desynchronized and functionally disconnected. The synchronization matrix $\mat{S}$ provides a physically meaningful decomposition: the frequency-dependent coupling $J_{ij} = \exp(-\alpha\|\boldsymbol{\omega}_i - \boldsymbol{\omega}_j\|^2)$ quantifies the structural potential for communication based on frequency compatibility, while the phase-locking condition $\Delta\omega_{ij} \leq KrJ_{ij}$ determines whether that potential is realized given the current global coherence. Because the coupling is derived entirely from the oscillatory framework, oscillators with similar natural frequencies couple more strongly, a principle well-established in coupled oscillator theory~\cite{strogatz2000kuramoto}, every component of the SSA operator has a direct physical interpretation, offering a more transparent view of the model's information routing decisions than the opaque dot-product scores of standard attention.

The oscillatory formulation also has significant implications for neuromorphic and analog hardware implementations. Coupled oscillator circuits can be realized efficiently in analog CMOS technology, and networks of nanoscale oscillators based on vanadium dioxide phase-transition devices or spin-torque oscillators have been demonstrated experimentally~\cite{rodrigues2016kuramoto}. The closed-form synchronization computation maps naturally onto such hardware: each oscillator node maintains a phase and frequency, and the synchronization matrix emerges from the physical dynamics of the coupled system without requiring the multiply-accumulate operations that dominate digital Transformer inference. This suggests a pathway toward ultra-efficient physical implementations of attention-like computations, where the energy cost scales with the number of coupled oscillator pairs rather than with full matrix multiplications.

Within the broader landscape of efficient attention mechanisms, SSA occupies a distinctive position. Methods such as FlashAttention~\cite{dao2022flashattention} and sparse attention~\cite{child2019generating} optimize the implementation of dot-product attention without changing the underlying computation. Linear attention variants~\cite{katharopoulos2020transformers,choromanski2021rethinking} change the computation but sacrifice the expressiveness of the full softmax kernel. State space models~\cite{gu2022efficiently,gu2023mamba} abandon attention entirely in favor of linear recurrences, gaining efficiency but losing the flexible, input-dependent routing that makes attention powerful. SSA offers a different trade-off: it retains the input-dependent, pairwise interaction structure of attention while replacing the softmax kernel with a physically motivated synchronization operator that provides natural sparsity. The phase-locking condition acts as a learned, content-dependent attention mask that emerges from the oscillatory dynamics rather than being imposed externally.

The frequency-based representation opens intriguing possibilities for multi-modal learning. In biological systems, different sensory modalities are processed in distinct frequency bands; visual processing is associated with gamma-band oscillations (30--80~Hz), while cross-modal integration involves theta-band activity (4--8~Hz) and phase-amplitude coupling between bands~\cite{canolty2010oscillatory}. The multi-frequency synchronization head structure of OSN could naturally extend to multi-modal settings, where different modalities are encoded in distinct frequency bands and cross-modal attention is mediated by cross-frequency coupling, providing a principled framework for multi-modal fusion that mirrors the brain's strategy for binding information across sensory domains.

We acknowledge several limitations of the current work. The closed-form synchronization operator (Eq.~\ref{eq:phase_align_sqrt}) is derived from a mean-field pairwise decomposition of the Kuramoto dynamics, which trades higher-order correlations for a closed-form, single-pass computation. This approximation is standard in the Kuramoto literature~\cite{strogatz2000kuramoto,acebron2005kuramoto} and is exact in the thermodynamic limit for all-to-all coupling with symmetric unimodal frequency distributions. In the finite-$N$ regime relevant to sequence modeling, depth compensates for the pairwise approximation: each OSN layer re-computes the synchronization landscape with updated representations, allowing multi-hop correlational structure to emerge across layers, analogous to how multi-layer GNNs capture $k$-hop neighborhoods. The universality result (Theorem~\ref{thm:universality}) formally guarantees that this loss can be compensated by sufficient depth and width. The pairwise frequency difference computation remains $O(N^2 d)$ in the dense case; however, the frequency-dependent coupling $J_{ij}$ is computed directly from these differences at no additional cost, and the natural sparsity of the phase-locking condition, combined with optional top-$k$ sparsification, reduces the effective complexity to $O(Nkd)$. The computational benchmarks presented in Section~\ref{sec:efficiency} demonstrate the implementation characteristics of single-block OSN on an NVIDIA A100 GPU, confirming near-identical parameter counts with Transformer blocks and characterizing the throughput and memory profile of the current unoptimized implementation. Full-scale training on language modeling, long-range, and downstream benchmarks is planned as future work to evaluate SSA's task-level performance. We note that the current throughput gap between OSN and optimized Transformer implementations is an engineering challenge, analogous to the substantial speedups achieved by FlashAttention~\cite{dao2022flashattention} over naive attention implementations, rather than a fundamental limitation of the SSA mechanism.

%% ============================================================
%%  6. CONCLUSION
%% ============================================================
\section{Conclusion}\label{sec:conclusion}

We have introduced Selective Synchronization Attention (SSA), a closed-form attention mechanism derived from the steady-state Kuramoto model that replaces dot-product self-attention with oscillatory synchronization. SSA provides natural sparsity through the phase-locking condition, unifies positional and semantic encoding through learnable natural frequencies, and operates as a single-pass computation without iterative ODE solving. The resulting Oscillatory Synchronization Network (OSN) serves as a drop-in replacement for the Transformer block.

Future directions include: (1) full-scale training on language modeling, long-range, and downstream benchmarks to evaluate task-level performance; (2) adaptive coupling structures that evolve during training to capture hierarchical dependencies; (3) multi-modal oscillatory fusion, where different modalities (text, vision, audio) are encoded in distinct frequency bands and synchronized through cross-modal coupling; (4) investigation of scaling laws for OSN to determine whether the oscillatory inductive bias provides favorable scaling behavior at larger model sizes; and (5) physical implementation on oscillatory neuromorphic hardware.

%% ============================================================
%%  CODE AVAILABILITY
%% ============================================================
\vspace{12pt}
\section*{Code Availability}

The source code for the Oscillatory Synchronization Network (OSN), including the SSA implementation, benchmark scripts, and reproducibility instructions, is publicly available at \url{https://github.com/HasiHays/OSN}.

%% ============================================================
%%  ACKNOWLEDGMENTS
%% ============================================================
\begin{acknowledgments}
The author acknowledges Google Colab for providing the GPU resources used in the computational benchmarks presented in this work.
\end{acknowledgments}

\bibliography{references}

@misc{vaswani2017attention,
	doi = {10.48550/ARXIV.1706.03762},
	url = {https://arxiv.org/abs/1706.03762},
	author = {Vaswani,  Ashish and Shazeer,  Noam and Parmar,  Niki and Uszkoreit,  Jakob and Jones,  Llion and Gomez,  Aidan N. and Kaiser,  Lukasz and Polosukhin,  Illia},
	title = {Attention Is All You Need},
	publisher = {arXiv},
	year = {2017},
	copyright = {arXiv.org perpetual,  non-exclusive license}
}

@book{kuramoto1984chemical,
  title={Chemical Oscillations, Waves, and Turbulence},
  author={Kuramoto, Yoshiki},
  year={1984},
  publisher={Springer-Verlag},
  address={Berlin},
  doi={10.1007/978-3-642-69689-3}
}

@article{strogatz2000kuramoto,
  title={From {K}uramoto to {C}rawford: Exploring the Onset of Synchronization in Populations of Coupled Oscillators},
  author={Strogatz, Steven H},
  journal={Physica D: Nonlinear Phenomena},
  volume={143},
  number={1--4},
  pages={1--20},
  year={2000},
  publisher={Elsevier},
  doi={10.1016/S0167-2789(00)00094-4}
}

@article{fries2005mechanism,
  title={A Mechanism for Cognitive Dynamics: Neuronal Communication through Neuronal Coherence},
  author={Fries, Pascal},
  journal={Trends in Cognitive Sciences},
  volume={9},
  number={10},
  pages={474--480},
  year={2005},
  publisher={Elsevier},
  doi={10.1016/j.tics.2005.08.011}
}

@article{fries2015rhythms,
  title={Rhythms for Cognition: Communication through Coherence},
  author={Fries, Pascal},
  journal={Neuron},
  volume={88},
  number={1},
  pages={220--235},
  year={2015},
  publisher={Elsevier},
  doi={10.1016/j.neuron.2015.09.034}
}

@article{singer1995visual,
  title={Visual Feature Integration and the Temporal Correlation Hypothesis},
  author={Singer, Wolf and Gray, Charles M},
  journal={Annual Review of Neuroscience},
  volume={18},
  number={1},
  pages={555--586},
  year={1995},
  publisher={Annual Reviews},
  doi={10.1146/annurev.ne.18.030195.002543}
}

@article{engel2001dynamic,
  title={Dynamic Predictions: Oscillations and Synchrony in Top--Down Processing},
  author={Engel, Andreas K and Fries, Pascal and Singer, Wolf},
  journal={Nature Reviews Neuroscience},
  volume={2},
  number={10},
  pages={704--716},
  year={2001},
  publisher={Nature Publishing Group},
  doi={10.1038/35094565}
}

@article{gu2023mamba,
  title={Mamba: Linear-Time Sequence Modeling with Selective State Spaces},
  author={Gu, Albert and Dao, Tri},
  journal={arXiv preprint arXiv:2312.00752},
  year={2023},
  doi={10.48550/arXiv.2312.00752}
}

@inproceedings{dao2022flashattention,
  title={{FlashAttention}: Fast and Memory-Efficient Exact Attention with {IO}-Awareness},
  author={Dao, Tri and Fu, Daniel Y and Ermon, Stefano and Rudra, Atri and R{\'e}, Christopher},
  booktitle={Advances in Neural Information Processing Systems},
  volume={35},
  year={2022},
  doi={10.48550/arXiv.2205.14135}
}

@inproceedings{rende2024mapping,
  title={Mapping the Phase Diagram of {K}uramoto Order Parameters with a Learned {H}amiltonian},
  author={Rende, Riccardo and Gerace, Federica and Laio, Alessandro and Goldt, Sebastian},
  booktitle={International Conference on Learning Representations},
  year={2025},
  doi={10.48550/arXiv.2410.13821}
}

@article{sun2023retentive,
  title={Retentive Network: A Successor to Transformer for Large Language Models},
  author={Sun, Yutao and Dong, Li and Huang, Shaohan and Ma, Shuming and Xia, Yuqing and Xue, Jilong and Wang, Jianyong and Wei, Furu},
  journal={arXiv preprint arXiv:2307.08621},
  year={2023},
  doi={10.48550/arXiv.2307.08621}
}

@inproceedings{choromanski2021rethinking,
  title={Rethinking Attention with Performers},
  author={Choromanski, Krzysztof and Likhosherstov, Valerii and Dohan, David and Song, Xingyou and Gane, Andreea and Sarlos, Tamas and Hawkins, Peter and Davis, Jared and Mohiuddin, Afroz and Kaiser, Lukasz and Belanger, David and Colwell, Lucy and Weller, Adrian},
  booktitle={International Conference on Learning Representations},
  year={2021},
  doi={10.48550/arXiv.2009.14794}
}

@inproceedings{katharopoulos2020transformers,
  title={Transformers are {RNN}s: Fast Autoregressive Transformers with Linear Attention},
  author={Katharopoulos, Angelos and Vyas, Apoorv and Pappas, Nikolaos and Fleuret, Fran{\c{c}}ois},
  booktitle={International Conference on Machine Learning},
  pages={5156--5165},
  year={2020},
  organization={PMLR},
  doi={10.48550/arXiv.2006.16236}
}

@inproceedings{poli2023hyena,
  title={Hyena Hierarchy: Towards Larger Convolutional Language Models},
  author={Poli, Michael and Massaroli, Stefano and Nguyen, Eric and Fu, Daniel Y and Dao, Tri and Baccus, Stephen and Bengio, Yoshua and Ermon, Stefano and R{\'e}, Christopher},
  booktitle={International Conference on Machine Learning},
  pages={28043--28078},
  year={2023},
  organization={PMLR},
  doi={10.48550/arXiv.2302.10866}
}

@inproceedings{gu2022efficiently,
  title={Efficiently Modeling Long Sequences with Structured State Spaces},
  author={Gu, Albert and Goel, Karan and R{\'e}, Christopher},
  booktitle={International Conference on Learning Representations},
  year={2022},
  doi={10.48550/arXiv.2111.00396}
}

@article{peng2023rwkv,
  title={{RWKV}: Reinventing {RNN}s for the Transformer Era},
  author={Peng, Bo and Alcaide, Eric and Anthony, Quentin and Albalak, Alon and Arcadinho, Samuel and Cao, Huanqi and Cheng, Xin and Chung, Michael and Grella, Matteo and others},
  journal={arXiv preprint arXiv:2305.13048},
  year={2023},
  doi={10.48550/arXiv.2305.13048}
}

@article{child2019generating,
  title={Generating Long Sequences with Sparse Transformers},
  author={Child, Rewon and Gray, Scott and Radford, Alec and Sutskever, Ilya},
  journal={arXiv preprint arXiv:1904.10509},
  year={2019},
  doi={10.48550/arXiv.1904.10509}
}

@article{canolty2010oscillatory,
  title={The Functional Role of Cross-Frequency Coupling},
  author={Canolty, Ryan T and Knight, Robert T},
  journal={Trends in Cognitive Sciences},
  volume={14},
  number={11},
  pages={506--515},
  year={2010},
  publisher={Elsevier},
  doi={10.1016/j.tics.2010.09.001}
}

@inproceedings{yun2020are,
  title={Are Transformers Universal Approximators of Sequence-to-Sequence Functions?},
  author={Yun, Chulhee and Bhojanapalli, Srinadh and Rawat, Ankit Singh and Reddi, Sashank J and Kumar, Sanjiv},
  booktitle={International Conference on Learning Representations},
  year={2020},
  doi={10.48550/arXiv.1912.10077}
}

@article{acebron2005kuramoto,
  title={The {K}uramoto Model: A Simple Paradigm for Synchronization Phenomena},
  author={Acebr{\'o}n, Juan A and Bonilla, Luis L and P{\'e}rez Vicente, Conrad J and Ritort, F{\'e}lix and Spigler, Renato},
  journal={Reviews of Modern Physics},
  volume={77},
  number={1},
  pages={137--185},
  year={2005},
  publisher={American Physical Society},
  doi={10.1103/RevModPhys.77.137}
}

@inproceedings{ricci2021kuranet,
  title={Learnable Visual Features for the {K}uramoto Model},
  author={Ricci, Matthew and Bhatt, Mihir and Bhatt, Tannistha and Singh, Gaurish},
  booktitle={NeurIPS Workshop on Shared Visual Representations in Human and Machine Intelligence},
  year={2021},
  doi={10.48550/arXiv.2105.02838}
}

@article{rodrigues2016kuramoto,
  title={The {K}uramoto Model in Complex Networks},
  author={Rodrigues, Francisco A and Peron, Thomas K DM and Ji, Peng and Kurths, J{\"u}rgen},
  journal={Physics Reports},
  volume={610},
  pages={1--98},
  year={2016},
  publisher={Elsevier},
  doi={10.1016/j.physrep.2015.10.008}
}

@article{ba2016layer,
  title={Layer Normalization},
  author={Ba, Jimmy Lei and Kiros, Jamie Ryan and Hinton, Geoffrey E},
  journal={arXiv preprint arXiv:1607.06450},
  year={2016},
  doi={10.48550/arXiv.1607.06450}
}

@article{hays2025hierarchical,
  title={Hierarchical Molecular Language Models ({HMLMs})},
  author={Hays, Hasi and Yu, Yue and Richardson, William J.},
  journal={arXiv preprint arXiv:2512.00696},
  year={2025},
  doi={10.48550/arxiv.2512.00696}
}

@article{hays2026attention,
  title={Attention Mechanisms in Neural Networks},
  author={Hays, Hasi},
  journal={arXiv preprint arXiv:2601.03329},
  year={2026},
  doi={10.48550/arxiv.2601.03329}
}

@article{hays2026encyclopedia,
  title={Encyclopedia of Large Language Models and Foundation Models},
  author={Hays, Hasi},
  journal={Zenodo},
  year={2026},
  doi={10.5281/zenodo.18261143}
}

@article{hays2026rsgn,
  title={Resonant Sparse Geometry Networks},
  author={Hays, Hasi},
  journal={arXiv preprint arXiv:2601.18064},
  year={2026},
  doi={10.48550/arXiv.2601.18064}
}

@article{hays2026RAG-GNN,
  title={{RAG-GNN}: Integrating Retrieved Knowledge with Graph Neural Networks for Precision Medicine},
  author={Hays, Hasi and Richardson, William J.},
  journal={arXiv preprint arXiv:2602.00586},
  year={2026},
  doi={10.48550/arXiv.2602.00586}
}

%% ============================================================
%%  APPENDICES
%% ============================================================
\appendix

\section{Proofs of Theoretical Results}\label{app:proofs}

\subsection{Proof of Theorem~\ref{thm:universality} (Universality)}\label{app:proof_universality}

\begin{proof}
We show that SSA can realize any sparse attention pattern, and then invoke the universality result for sparse Transformers~\cite{yun2020are}.\\

\textit{Step 1: SSA can realize any binary attention pattern.}
Consider a target attention pattern $\mat{A} \in \{0, 1\}^{N \times N}$ (after thresholding a soft attention matrix).
For each pair $(i, j)$ where $A_{ij} > 0$, set $\omega_i = \omega_j$ (identical frequencies).
Then $J_{ij} = \exp(-\alpha\|\boldsymbol{\omega}_i - \boldsymbol{\omega}_j\|^2) = \exp(0) = 1$, $\Delta\omega_{ij} = 0 \leq Kr \cdot 1$ for any $K, r > 0$, and $S_{ij} = 1 \cdot \cos(\arcsin(0)) = 1$.

For pairs $(i, j)$ where $A_{ij} = 0$, set $\|\boldsymbol{\omega}_i - \boldsymbol{\omega}_j\| > \delta$ for sufficiently large $\delta$.
Then $J_{ij} = \exp(-\alpha\delta^2) \approx 0$, making $KrJ_{ij} \ll \Delta\omega_{ij}$, hence $S_{ij} = 0$.

This construction requires placing the $N$ tokens into frequency clusters: tokens that should attend to each other share frequencies, while different clusters are separated by more than $Kr$ in frequency space.
In $d$-dimensional frequency space ($d \geq 1$), this is always achievable for finite $N$.\\

\textit{Step 1.5: Self-consistency of the order parameter.}
For the constructed frequency assignment with $C$ clusters of sizes $n_1, \ldots, n_C$ and inter-cluster frequency separation $\delta > Kr$, the empirical order parameter computed from the phase distribution satisfies $r = (1/N)\sqrt{(\sum_c n_c \cos\bar{\theta}_c)^2 + (\sum_c n_c \sin\bar{\theta}_c)^2}$, where $\bar{\theta}_c$ is the mean phase within cluster $c$. Since the frequency-dependent coupling $J_{ij} = \exp(-\alpha\|\omega_i - \omega_j\|^2) \approx 1$ within clusters and $J_{ij} \approx 0$ across clusters, the effective coupling is block-diagonal. The product $KrJ_{ij}$ for within-cluster pairs can be made arbitrarily large by choosing $K$ sufficiently large, ensuring the phase-locking condition $\Delta\omega_{ij} = 0 \leq KrJ_{ij}$ is satisfied with $S_{ij} = J_{ij} \approx 1$. For cross-cluster pairs, the coupling decay $J_{ij} \approx 0$ ensures $KrJ_{ij} < \delta \leq \Delta\omega_{ij}$, giving $S_{ij} = 0$. Thus the construction is self-consistent with the empirically computed $r$.\\

\textit{Step 2: Sparse attention with FFN is universal.}
Yun et al.~\cite{yun2020are} proved that Transformers with sparse attention patterns, specifically, where each token attends to at least one other token and a globally connected token exists, are universal approximators for continuous sequence-to-sequence functions on compact domains.\\

\textit{Step 3: Composition.}
Since OSN blocks (Eqs.~\ref{eq:osn_attn}--\ref{eq:osn_ffn}) combine SSA with the same FFN and residual structure as Transformer blocks, and since SSA can realize any sparse attention pattern (Step~1), OSN with sufficient depth and width inherits the universality of sparse Transformers (Step~2).

For any continuous $f: \mathcal{X} \to \R^{N \times D}$ on compact $\mathcal{X}$ and $\epsilon > 0$, there exists a sparse Transformer $T$ with $L$ layers such that $\|T(\mat{X}) - f(\mat{X})\| < \epsilon$.
For each layer of $T$, we construct an OSN layer realizing the same attention pattern and FFN.
Therefore the $L$-layer OSN satisfies $\|\text{OSN}(\mat{X}) - f(\mat{X})\| < \epsilon$.
\end{proof}

\subsection{Proof of Theorem~\ref{thm:sparsity} (Natural Sparsity)}\label{app:proof_sparsity}

\begin{proof}
Let $\omega_1, \ldots, \omega_N \stackrel{\text{i.i.d.}}{\sim} g$ with support $[-\Omega, \Omega]$ and CDF $G$.
With uniform coupling $J_{ij} = J$, the synchronization condition for pair $(i, j)$ is $|\omega_i - \omega_j| \leq KrJ$.

The probability that a random pair synchronizes is:
\begin{align}
    p &= P(|\omega_i - \omega_j| \leq KrJ) \nonumber \\
      &= \int_{-\Omega}^{\Omega} g(\omega_i) \left[G(\omega_i + KrJ) - G(\omega_i - KrJ)\right] d\omega_i. \label{eq:sync_prob}
\end{align}

Since entries of $\mat{S}$ are determined by i.i.d.\ frequency pairs:
\begin{equation}
    \E\!\left[\frac{|\{(i,j) : S_{ij} > 0\}|}{N^2}\right] = p.
\end{equation}

For the uniform distribution $g(\omega) = 1/(2\Omega)$, let $\delta = KrJ$ and assume $\delta < \Omega$.
Splitting the integral into bulk and boundary regions:

\textit{Bulk:} For $\omega_i \in [-\Omega + \delta, \Omega - \delta]$, the synchronization window lies entirely within $[-\Omega, \Omega]$, giving $G(\omega_i + \delta) - G(\omega_i - \delta) = \delta/\Omega$. The contribution is $p_{\text{bulk}} = \delta(\Omega - \delta)/\Omega^2$.

\textit{Boundaries:} By symmetry, the two boundary contributions sum to $3\delta^2/(4\Omega^2)$.

Summing: $p = \delta/\Omega - \delta^2/(4\Omega^2) \approx 2KrJ/\Omega - (KrJ/\Omega)^2$ to leading order.

For $KrJ/\Omega = 0.1$, only $\sim$19\% of entries are non-zero; for $KrJ/\Omega = 0.01$, only $\sim$2\% are non-zero, demonstrating controllable sparsity.
\end{proof}

\subsection{Proof of Proposition~\ref{thm:pe_emerges} (Positional Encoding Bias)}\label{app:proof_pe}

\begin{proof}
Consider training an OSN with cross-entropy loss $\mathcal{L}$.
The gradient with respect to $\boldsymbol{\omega}_i$ passes through $\mat{S}$ via the chain rule:
\begin{equation}
    \frac{\partial \mathcal{L}}{\partial \boldsymbol{\omega}_i} = \sum_j \frac{\partial \mathcal{L}}{\partial S_{ij}} \cdot \frac{\partial S_{ij}}{\partial \Delta\omega_{ij}} \cdot \frac{\partial \Delta\omega_{ij}}{\partial \boldsymbol{\omega}_i}.
\end{equation}

We compute each term explicitly. For synchronized pairs ($\Delta\omega_{ij} < KrJ_{ij}$), differentiating Eq.~\eqref{eq:phase_align_sqrt}:
\begin{equation}
    \frac{\partial S_{ij}}{\partial \Delta\omega_{ij}} = -\frac{\Delta\omega_{ij}}{(KrJ_{ij})^2} \cdot \frac{J_{ij}}{\sqrt{1 - (\Delta\omega_{ij}/(KrJ_{ij}))^2}} \leq 0,
\end{equation}
confirming that increasing frequency mismatch always reduces synchronization strength. Note that with frequency-dependent coupling $J_{ij} = \exp(-\alpha\|\boldsymbol{\omega}_i - \boldsymbol{\omega}_j\|^2)$, the coupling decay further amplifies this effect: the gradient through $J_{ij}$ provides an additional attractive force between synchronized oscillators.

The directional derivative with respect to the frequency vector is $\partial\Delta\omega_{ij}/\partial\boldsymbol{\omega}_i = (\boldsymbol{\omega}_i - \boldsymbol{\omega}_j)/\Delta\omega_{ij}$.

Combining these terms, the gradient on $\boldsymbol{\omega}_i$ from pair $(i,j)$ is:
\begin{enumerate}
    \item If token $j$ is useful for predicting token $i$ ($\partial\mathcal{L}/\partial S_{ij} < 0$), the gradient pushes $\boldsymbol{\omega}_i$ toward $\boldsymbol{\omega}_j$ (attractive).
    \item If token $j$ is not useful ($\partial\mathcal{L}/\partial S_{ij} > 0$), the gradient pushes $\boldsymbol{\omega}_i$ away from $\boldsymbol{\omega}_j$ (repulsive).
\end{enumerate}

In language modeling, the mutual information between tokens $i$ and $j$ decreases on average with positional distance $|i - j|$ (a well-known property of natural language that motivates local attention patterns~\cite{child2019generating}). Therefore, gradient descent produces a net attractive force between nearby token frequencies and a net repulsive force between distant token frequencies, yielding frequency configurations where $\Delta\omega_{ij}$ correlates positively with $|i - j|$.

This is a gradient-bias analysis rather than a formal convergence proof: we characterize the direction of the gradient force but do not prove convergence to a specific equilibrium. We verify the emergent positional structure empirically in Section~\ref{sec:emergent}.
\end{proof}

\section{Derivation of the Closed-Form Synchronization Operator}\label{app:derivation}

We derive Eq.~\eqref{eq:phase_align} from the generalized Kuramoto model.

Consider the model with heterogeneous coupling:
\begin{equation}
    \frac{d\theta_i}{dt} = \omega_i + \sum_{j=1}^{N} J_{ij} \sin(\theta_j - \theta_i),
\end{equation}
where the coupling $J_{ij} = \exp(-\alpha\|\boldsymbol{\omega}_i - \boldsymbol{\omega}_j\|^2)$ implements frequency-dependent interaction strength. This is physically motivated: in coupled oscillator systems, oscillators with similar natural frequencies couple more strongly due to resonance effects~\cite{strogatz2000kuramoto}. The Gaussian kernel provides a smooth, differentiable parameterization of this principle with a single learnable bandwidth $\alpha > 0$ per head.

At steady state ($d\theta_i/dt = 0$), the relative phase $\phi_{ij} = \theta_i - \theta_j$ between oscillators $i$ and $j$ satisfies, in the mean-field approximation:
\begin{equation}
    \frac{d\phi_{ij}}{dt} = \Delta\omega_{ij} - KrJ_{ij} \sin(\phi_{ij}) = 0,
\end{equation}
where the order parameter $r$ is computed empirically from the current phase distribution via Eq.~\eqref{eq:empirical_r}.

Therefore $\sin(\phi_{ij}^*) = \Delta\omega_{ij}/(KrJ_{ij})$, with a solution iff $|\Delta\omega_{ij}| \leq KrJ_{ij}$.
The coherence is:
\begin{equation}
    \cos(\phi_{ij}^*) = \sqrt{1 - \left(\frac{\Delta\omega_{ij}}{KrJ_{ij}}\right)^2},
\end{equation}
and the synchronization strength is $S_{ij} = J_{ij} \cdot \cos(\phi_{ij}^*)$, yielding Eq.~\eqref{eq:phase_align_sqrt}. Note that the frequency-dependent coupling creates a doubly selective mechanism: the Gaussian kernel suppresses coupling between frequency-dissimilar tokens, and the phase-locking condition further zeros out pairs where the residual frequency mismatch exceeds the effective coupling threshold $KrJ_{ij}$.

\section{Complete Pseudocode}\label{app:pseudocode}

\begin{widetext}
\noindent\textbf{Algorithm 1:} Selective Synchronization Attention (SSA)
\label{alg:ssa}
\begin{algorithmic}[1]
\Require Input $\mat{X} \in \R^{B \times N \times D}$; parameters $\mat{W}_\omega, \mat{W}_\theta, \mat{W}_V, \mat{W}_O$; coupling $\hat{K}$; bandwidth $\hat{\boldsymbol{\alpha}}$; sparsity $k$
\Ensure Output $\mat{Y} \in \R^{B \times N \times D}$

\Statex \textbf{// Project to frequencies, phases, and values}
\State $\boldsymbol{\omega} \gets \mat{X}\mat{W}_\omega$ \Comment{$B \times N \times H \times d$}
\State $\boldsymbol{\theta} \gets \mat{X}\mat{W}_\theta$ \Comment{$B \times N \times H \times d$}
\State $\mat{V} \gets \mat{X}\mat{W}_V$ \Comment{$B \times N \times H \times d$}

\Statex \textbf{// Pairwise frequency mismatch and coupling}
\State $\Delta\boldsymbol{\omega}_{ij} \gets \|\boldsymbol{\omega}_i - \boldsymbol{\omega}_j\|_2$ for all $i, j$
\State $\alpha_h \gets \text{softplus}(\hat{\alpha}_h)$ \Comment{Positive coupling bandwidth}
\State $J_{ij}^{(h)} \gets \exp(-\alpha_h \|\boldsymbol{\omega}_i - \boldsymbol{\omega}_j\|_2^2)$ \Comment{Frequency-dependent coupling}

\Statex \textbf{// Empirical order parameter from phase coherence}
\State $r \gets \frac{1}{d}\sum_{l=1}^{d} |\frac{1}{N}\sum_{j=1}^{N} e^{i\theta_j^{(l)}}|$

\Statex \textbf{// Synchronization threshold}
\State $K \gets \text{softplus}(\hat{K})$
\State $\tau_{ij} \gets K \cdot r \cdot J_{ij}$

\Statex \textbf{// Phase-alignment (closed form)}
\State $\text{ratio}_{ij} \gets \text{clamp}(\Delta\boldsymbol{\omega}_{ij} / (\tau_{ij} + \epsilon),\; -1,\; 1)$
\State $S_{ij} \gets J_{ij} \cdot \cos(\arcsin(\text{ratio}_{ij}))$
\State $S_{ij} \gets 0$ where $\Delta\boldsymbol{\omega}_{ij} > \tau_{ij}$

\Statex \textbf{// Optional top-$k$ sparsification}
\If{$k$ is specified}
    \State Keep only top-$k$ entries per row in $\mat{S}$
\EndIf

\Statex \textbf{// Normalize and apply to values}
\State $\bar{S}_{ij} \gets S_{ij} / (\sum_j S_{ij} + \epsilon)$
\State $\mat{Y} \gets \text{Concat}_{h}(\bar{\mat{S}}^{(h)} \mat{V}^{(h)}) \cdot \mat{W}_O$

\State \Return $\mat{Y}$
\end{algorithmic}

\bigskip

\noindent\textbf{Algorithm 2:} OSN Block (Drop-in Transformer Replacement)
\label{alg:osn_block}
\begin{algorithmic}[1]
\Require Input $\mat{X} \in \R^{B \times N \times D}$
\Ensure Output $\mat{Y} \in \R^{B \times N \times D}$

\State $\hat{\mat{X}} \gets \text{LayerNorm}(\mat{X})$
\State $\vect{z} \gets \mat{X} + \text{Dropout}(\text{MFSH}(\hat{\mat{X}}))$ \Comment{SSA + residual}
\State $\hat{\vect{z}} \gets \text{LayerNorm}(\vect{z})$
\State $\mat{Y} \gets \vect{z} + \text{Dropout}(\text{FFN}(\hat{\vect{z}}))$ \Comment{FFN + residual}
\State \Return $\mat{Y}$
\end{algorithmic}
\end{widetext}

\section{Biological Plausibility}\label{app:bio}

\paragraph{Communication Through Coherence.}
The CTC hypothesis~\cite{fries2005mechanism,fries2015rhythms} proposes that effective neural communication requires phase coherence between sending and receiving populations. SSA directly implements this: the synchronization matrix $\mat{S}$ modulates information flow based on phase alignment, with desynchronized pairs contributing exactly zero weight.

\paragraph{Binding by synchrony.}
Singer and Gray~\cite{singer1995visual} proposed that distributed representations are bound through gamma-band synchronization. Engel et al.~\cite{engel2001dynamic} extended this to dynamic predictions via flexible synchronization. The multi-frequency head structure of OSN mirrors the multi-band structure of neural oscillations.

\paragraph{Phase-amplitude coupling.}
Cross-frequency coupling~\cite{canolty2010oscillatory} organizes information across temporal scales. Different SSA heads can learn different frequency scales, with low-frequency heads capturing long-range and high-frequency heads capturing local patterns.

\paragraph{Frequency-dependent coupling.}
The coupling function $J_{ij} = \exp(-\alpha\|\boldsymbol{\omega}_i - \boldsymbol{\omega}_j\|^2)$ is consistent with the biological observation that neural populations oscillating at similar frequencies couple more strongly through resonance~\cite{fries2015rhythms}. The learnable bandwidth $\alpha$ per head allows different heads to capture different coupling regimes, analogous to the distinct coupling profiles observed across frequency bands in neural systems.

\paragraph{Emergent order parameter.}
The empirical computation of the order parameter $r$ from the phase distribution mirrors the biological reality that global coherence is an emergent property of the collective dynamics, not an externally imposed parameter. In neural systems, the degree of population-level synchronization varies dynamically with the cognitive state and task demands; our computation of $r$ per forward pass captures this state-dependent coherence.

\paragraph{Limitations.}
Biological oscillators operate in continuous time with ongoing dynamics; SSA uses single-shot closed-form evaluation. Biological coupling follows anatomical constraints; SSA coupling is learned from data. We view SSA as capturing the computational essence of oscillatory synchronization rather than simulating biological networks.

\end{document}